\newcommand{\acaps}[1]{{\scshape #1}}
\newacronym[\glslongpluralkey={Conditional Neural Processes}]{cnp}{\acaps{cnp}}{Conditional Neural Process}
\newcommand{\bI}{\mathbf{I}}
\newcommand{\bW}{\mathbf{W}}
\newcommand{\calD}{{\mathcal{D}}}
\newcommand{\calF}{{\mathcal{F}}}
\newcommand{\calN}{{\mathcal{N}}}
\newcommand{\bbG}{\mathbb{G}}
\newcommand{\bbP}{\mathbb{P}}
\newcommand{\bbR}{\mathbb{R}}
\newcommand{\btheta}{{\boldsymbol{\theta}}}
\newcommand{\bmu}{{\boldsymbol{\mu}}}
\newcommand{\bphi}{{\boldsymbol{\phi}}}
\newcommand{\bTheta}{\boldsymbol{\Theta}}
\newcommand{\bSigma}{\boldsymbol{\Sigma}}
\theoremstyle{plain}
\newtheorem{thm}{Theorem}[section]
\newtheorem{prop}[thm]{Proposition}
\theoremstyle{definition}
\theoremstyle{remark}
\newcommand{\dee}{\mathrm{d}}
\DeclareMathOperator*{\argmin}{arg\,min}
\newcommand{\KL}{D_\mathrm{KL}}
\newcommand{\iidsim}{\overset{\mathrm{i.i.d.}}{\sim}}
\def\[#1\]{\begin{equation}\begin{aligned}#1\end{aligned}\end{equation}}
\definecolor{citecolor}{RGB}{0,102,204}
\definecolor{linkcolor}{RGB}{190,105,30}
\definecolor{urlcolor}{RGB}{199,21,133}
\crefname{section}{\S}{\S\S}
\newsavebox\CBox 
\def\UL#1{\underline{#1}}
\def\BL#1{\sbox\CBox{#1}\resizebox{\wd\CBox}{\ht\CBox}{\underline{\textbf{#1}}}}
\def\PM#1{\scriptsize{\textpm#1}}
\newcommand{\bszero}{\boldsymbol{0}}
\title{Enhancing Transfer Learning with Flexible Nonparametric Posterior Sampling}
\author{%
    Hyungi Lee$^{*1}$ \quad Giung Nam$^{*1}$ \quad Edwin Fong$^{2}$ \quad Juho Lee$^{1,3}$ \\
    $^1$KAIST AI \quad $^2$The University of Hong Kong \quad $^3$AITRICS \\
    \texttt{\{lhk2708, giung, juholee\}@kaist.ac.kr, chefong@hku.hk}
}
\newcommand\blfootnote[1]{
    \begingroup
    \renewcommand\thefootnote{}\footnote{#1}
    \addtocounter{footnote}{-1}
    \endgroup
}
\begin{document}
\blfootnote{$^*$ Equal contribution}

\maketitle

\begin{abstract}
Transfer learning has recently shown significant performance across various tasks involving deep neural networks. In these transfer learning scenarios, the prior distribution for downstream data becomes crucial in Bayesian model averaging (BMA). While previous works proposed the prior over the neural network parameters centered around the pre-trained solution, such strategies have limitations when dealing with distribution shifts between upstream and downstream data. This paper introduces nonparametric transfer learning (NPTL), a flexible posterior sampling method to address the distribution shift issue within the context of nonparametric learning. The nonparametric learning (NPL) method is a recent approach that employs a nonparametric prior for posterior sampling, efficiently accounting for model misspecification scenarios, which is suitable for transfer learning scenarios that may involve the distribution shift between upstream and downstream tasks. Through extensive empirical validations, we demonstrate that our approach surpasses other baselines in BMA performance.
\end{abstract}

\section{Introduction}
\label{main:sec:introduction}

In Bayesian deep learning, we regard the parameters of a deep neural network as random variables. Instead of optimizing for a single-point estimate of these parameters, this approach involves inferring the posterior distribution of these parameters given the provided training data and predefined parameter prior distribution. After we have the posterior distribution, we make predictions through Bayesian model averaging (BMA). BMA entails computing predictions from multiple parameter values and weighting them based on their respective densities within the posterior. BMA effectively integrates both data uncertainty and model uncertainty into the prediction process, leading to more accurate and resilient predictions~\citep{hoeting1999bayesian}.

The success of Bayesian deep learning often depends on the choice of the prior distribution. While it is common practice to employ a simple zero-mean Gaussian prior for neural network parameters, there is an ongoing discussion regarding the adequacy of these zero-mean Gaussian priors~\citep{wenzel2020how,fortuin2022bayesian}. Meanwhile, in the context of transfer learning scenarios, these concerns about prior configurations are further intensified. The fundamental idea behind the transfer learning process is that when model parameters are pre-trained using sufficiently extensive and versatile upstream data, they inherently capture biases related to the data modality, which can be beneficial in related downstream tasks~\citep{mikolov2013efficient,girshick2014rich}. In this case, there is a doubt that the Gaussian prior over neural network parameters can sufficiently capture solely the \textit{``prior knowledge''} embedded in the upstream data.

To address this issue, we directed our attention to a nonparametric posterior sampling method called Bayesian nonparametric learning~\citep[NPL;][]{lyddon2018nonparametric,fong2019scalable}. NPL enables the use of statistical models without assuming the model is true. It utilizes a nonparametric prior, such as a mixture of Dirichlet Processes~\citep{antoniak1974mixtures}, centered around a parametric model then updates a nonparametric posterior for the parameters of the parametric model. The NPL approach effectively accounts for the model misspecification by an implicitly defined prior. This prior is chosen without any reliance on the specific model of interest, which is suitable in the context of a misspecified model as we do not have confidence in the existence of a true model.

Our primary contribution is utilizing the NPL approach, which effectively accounts for the model misspecification, within the context of transfer learning. While the fundamental premise of transfer learning is that the \textit{``prior knowledge''} obtained from upstream data is advantageous for downstream tasks, empirical studies have indicated that prior regularization centered around pre-trained model in the weight space can hinder the downstream learning process in cases where there are mismatches between the upstream knowledge and downstream tasks~\citep{xuhong2018explicit,wan2019towards,li2020rethinking}. In this regard, we were intrigued by the prospective benefits presented by NPL's efficacy in addressing model misspecification, especially in transfer learning situations where there can be a distribution shift between upstream and downstream data. This is done by combining the upstream and downstream information in a \textit{nonparametric} fashion, resembling a form of weighted summation between the downstream dataset and the pseudo dataset.

We summarize our contribution as follows:
\begin{itemize}
    \item We proposed nonparametric transfer learning (NPTL): a posterior sampling method that adapts the NPL method into the transfer learning scenario where both the parameter of interest and the downstream dataset are very large. 
    \item Our proposed posterior sampling algorithm can be easily parallelized. We highlight that such parallelization is not straightforward for stochastic gradient Markov Chain Monte Carlo (MCMC) methods, which are commonly used in Bayesian deep learning for posterior sampling, due to their sequential nature~\citep{neiswanger2014asymptotically,de2022parallel}.
    \item We empirically validate that the proposed algorithm shows better performance compared to the other baseline posterior sampling methods.
\end{itemize}

\section{Preliminaries}
\label{main:sec:preliminary}

\subsection{Bayesian deep learning}

In Bayesian inference, 
the goal is to sample from the posterior distribution $p(\btheta|\calD)$ over the parameters $\btheta$ after observing the data $\calD$. The prediction for a new datum $x$ is then given by Bayesian model averaging (BMA),
\[\label{eq_bma}
p(y|x,\calD) = \int p(y|x,\btheta) p(\btheta|\calD) \dee\btheta,
\]
which we can approximate by Monte Carlo integration $p(y|x,\calD) \approx \sum_{m=1}^{M}p(y|x,\btheta_m)/M$ with posterior samples $\btheta_1,...,\btheta_M \sim p(\btheta|\calD)$. In practice, we typically introduce several sampling methods for $p(\btheta|\calD)$ to compute the BMA integration since it cannot be expressed in closed form for modern deep neural networks.

In order to work with the posterior $p(\btheta|\calD)$, it is essential to first establish the prior $p(\btheta)$. The most common choice of the uninformed prior distribution is a zero mean isotropic Gaussian, i.e. $p(\btheta) = \calN(\btheta; \bszero, \sigma^2\bI)$, which is equivalent to $L_2$ weight decay regularization~\citep{krogh1991simple}. 
A straightforward way to enhance such a prior is to select the mean of the isotropic Gaussian with a pre-determined value~\citep{chelba2006adaptation,daume2007frustratingly,grachten2017strategies,xuhong2018explicit}, i.e. $p(\btheta)=\calN(\btheta;\btheta_{\text{prior}},\sigma^2\bI)$, where $\btheta_{\text{prior}}$ is some informative weight with condensed information of the source task (e.g. MAP solution pre-trained on the source task). Recently, \citet{shwartz2022pre} proposed a method to build an informative prior from a pre-trained model, where they compute \emph{both} mean and covariance matrix of the Gaussian prior using SWA-Gaussian~\citep{maddox2019simple} procedure. However, none of these methods take into account the downstream dataset when creating the weight prior distribution.

\subsection{Transfer learning}
Modern deep learning algorithms often employ the \textit{pre-training and fine-tuning paradigm}~\citep{he2019rethinking}: pre-train a model from scratch on a large source dataset, then fine-tune the model for a specific target task. This is based on the belief that there is some \textit{transferable information} between source and target tasks, i.e. we expect features learned from source data to be useful on target tasks. Compared to training from scratch on a target task where the optimization starts from a randomly initialized parameter, transfer learning starting from the pre-trained parameter converges quicker to a reasonable solution. While it is common to fine-tune all parameters in a pre-trained model, depending on the target task and computational resources available, it may be possible to fine-tune only a subset of parameters while keeping the rest fixed. An example of this is \textit{linear probing}, where the feature extractor is frozen, and only the last linear layer is newly trained for the target task.

\subsection{Nonparametric learning}
\label{main:subsec:npl}
We assume that a downstream dataset $\calD := (x_i, y_i)_{i=1}^n$ is drawn i.i.d. from some data distribution $F_0$, and we are interested in a class of parametric models $\calF_{\bTheta} := \{ F_{\btheta} \mid \btheta \in \bTheta \}$ where $F_{\btheta}$ denotes the model data with $\btheta \in \bTheta \subseteq \bbR^p$ (e.g. a neural network with parameter $\btheta$). Here, we do not assume that our target distribution $F_0$ belongs to $\calF_{\bTheta}$, indicating the possibility of model misspecification. 

The NPL framework defines a parameter of interest as a functional of $F_0$, that is, $\btheta_0 = \btheta(F_0)$ where
\begin{align}\label{eq:loss}
    \btheta(F_0) = \argmin_{\btheta}\int \ell(\btheta;x,y) \, dF_0(x,y),
\end{align}
and $\ell(\btheta; x, y)$ is a loss function of interest. In  supervised learning, we set $\ell(\btheta;x, y) = - \log f_\btheta(y\mid x, \btheta)$ as the negative log density of $F_\btheta$,  which gives $\btheta_0 = \argmin_\btheta \KL[F_0\Vert F_\btheta]$ \citep{walker2013bayesian}. Since we do not know $F_0$, we treat it as a random measure $F$, and elicit a Dirichlet process~\citep[DP;][]{ferguson1973bayesian} prior following \citet{fong2019scalable}, that is $F \sim \mathrm{DP}(\alpha, F_\pi)$. Here, $F_\pi$ is a centering probability measure acting as our prior guess on $F_0$, and $\alpha \in \bbR_+$ is a scalar hyperparameter indicating the strength of this prior belief. By conjugacy, the posterior of $F$ given $\calD$ is also a DP:
\begin{align*}
    F \mid \calD \sim \text{DP}\left(\alpha + n, \frac{\alpha}{\alpha + n}F_\pi + \frac{1}{\alpha + n}\sum_{i=1}^n \delta_{(x_i,y_i)}\right).
\end{align*}


The NPL posterior $\tilde{p}(\btheta \mid \calD)$ is then the pushforward measure of $p(F \mid \calD)$ through (\ref{eq:loss}), from which one can sample by drawing $F^{(j)} \sim p(F \mid \calD)$ and computing $\btheta^{(j)} = \btheta\left(F^{(j)}\right)$. Draws from $p(F \mid \calD)$ can be carried out using a skeleton of $T$ atoms from $F_\pi$ and appropriate Dirichlet weights, which we outline later.
When $\ell$ is highly non-convex, \cite{fong2019scalable} explores two possible heuristics for feasible sampling. If one is interested in sampling from multiple modes, the global minima in (\ref{eq:loss}) can be estimated via random restart optimization with multiple random initializations. Alternatively, one may target a local mode by initializing each optimization at some fixed $\boldsymbol{\theta}_{\text{init}}$ of interest around which we want to quantify uncertainty. Finally, \cite{lyddon2018nonparametric,fong2019scalable} show that the NPL posterior is robust to model misspecification through the adoption of a nonparametric model, and gives asymptotically superior predictions to the regular Bayesian posterior on $\btheta$.

\section{Nonparametric Transfer Learning}
\label{main:sec:method}

In this section, we provide details on our proposed approach for posterior sampling called NPTL, tailored for common transfer learning scenarios involving a single pre-trained parameter. The key feature of these situations is that downstream solutions are typically found near the pre-trained parameter~\citep{neyshabur2020being}. Consequently, this aligns with the second scenario described in \cref{main:subsec:npl}, which focuses on exploring a local mode centered around the pre-trained parameter.


\subsection{Base measure}
\label{main:subsec:base}

To form our initial estimate $F_\pi$ in the DP prior, it is essential to incorporate our prior understanding of the true data distribution $F_0$. This prior knowledge can be derived from the upstream task in transfer learning scenarios, which we outline below. Formally, we define an informative base measure $F_\pi(x, y)$ by utilizing the empirical distribution of the upstream dataset $\{x_i^{\text{up}},y_i^{\text{up}}\}_{i=1}^{n_{\text{up}}}$,
\begin{align*}
F_\pi(x, y) = \frac{1}{n_{\text{up}}}\sum_{i=1}^{n_{\text{up}}}\delta_{(x_i^{\text{up}}, y_i^{\text{up}})}(x, y),
\end{align*}
where $\delta_{(x,y)}$ denotes a discrete point measure at $(x,y)$. However, in many real-world machine learning scenarios, this approach faces challenges for two primary reasons. Firstly, the upstream datasets are often kept private and undisclosed, e.g. JFT-300M and JFT-3B datasets~\citep{sun2017revisiting}. Secondly, handling such extensive datasets can demand substantial memory resources, making it unfeasible to directly build the empirical distribution.

To overcome these issues, we introduce an alternative approach that solely depends on pre-trained parameters and downstream data for constructing an informative base measure. Specifically, we first partition the model parameters $\btheta$ into feature extractor parameters $\bphi$ and task-specific head parameters $\bW$, i.e. $\btheta=(\bphi,\bW)$, to address the variability in output dimensions across different tasks. Next, we utilize linear probing to transfer the upstream knowledge to the downstream setting, as the pre-trained feature extractor parameters $\bphi^\ast$ should capture the essence of the upstream data generation process. Using the linear-probed model $f_\text{probed}$ and downstream inputs $\{x_i\}_{i=1}^{n}$, we define the informative base measure $F_\pi(x,y)$ as
\[
\label{eq:base_measure}
F_\pi(x, y) = \frac{1}{n}\sum_{i=1}^{n}\delta_{(x_i, f_{\text{probed}}(x_i))}(x, y).
\]
Our DP prior on $F_0$ then becomes $\text{DP}(\alpha, F_\pi)$, where $\alpha$ is the strength of our belief. The value of $\alpha$ is chosen through a validation set to regulate the proper degree of our belief, outlined below.

\textbf{Empirical Bayes.}
We highlight that both $F_\pi$ and $\alpha$ in the DP prior are dependent on the downstream data $\calD$, which indicates that our method can be understood as an \textit{empirical Bayes} approach~\citep{robbins1956empirical}. 
In particular, we address two key aspects: 1) the choice of a loss function and the data for training $\bW$ and 2) the selection of an appropriate $\alpha$. To ensure that $f_{\text{probed}}$ emulates the downstream data generating process based on our prior knowledge, we employed the same loss function $\ell(\btheta;x,y)$, that is the negative log-likelihood, on the downstream dataset to learn $\bW$. For the selection of $\alpha$, we minimize the negative log-likelihood on a held-out validation set which constitutes $10\%$ of the entire training dataset, which has close connections to maximizing the marginal likelihood \citep{fong2020marginal}.


\begin{algorithm}[t]
\caption{Nonparametric Transfer Learning.}
\label{algorithm/ours}
\begin{algorithmic}[1]
    \Require{Pre-trained feature extractor parameters $\bphi^\ast$ and the linear-probed head $\bW^\ast$ for downstream data $\{x_i,y_i\}_{i=1}^{n}$. The linear-probed model $f_\text{probed}$ is parameterized by $(\bphi^\ast,\bW^\ast)$.}
    \Ensure{Downstream posterior samples $\{\btheta^{(m)}\}_{m=1}^{M}$.}
    \item[]
    \State{Define $f_\pi(x) \coloneqq f_{\text{probed}}(x)$.}
    \For{$m=1$ {\bfseries to} $M$}
        \State{Generate prior pseudo-samples $(x_{i}, f_\pi(x_{i}))_{i=1}^{n}$.}
        \State{Randomly construct index mapping function $u:[n]\rightarrow [L]\times [L]$}
        \State{Draw $(v^{(m)}_{1:L},\tilde{v}^{(m)}_{1:L}) \sim 2L*\text{Dir}\left(1,\dots,1,\alpha / n, \dots, \alpha/n \right)$.}
        \State{Replace $(w_1,\ldots,w_n,\widetilde{w}_1,\ldots,\widetilde{w}_n)=(v_{u(1)[1]},\ldots,v_{u(n)[1]},\tilde{v}_{u(1)[2]},\ldots,\tilde{v}_{u(n)[2]})
        $}
        \State{Optimize $\btheta^{(m)} = \argmin_{\btheta} \left\{
            \sum_{j=1}^{n} w^{(m)}_j \ell(\btheta ; x_{j}, y_{j}) +
            \sum_{k=1}^{n} \widetilde{w}^{(m)}_k  \ell(\btheta ; x_{k}, f_\pi(x_k)) \right\}$.}
    \EndFor
\end{algorithmic}
\end{algorithm}

\subsection{Block Dirichlet distribution}

Recall the posterior of $F$ after observing the downstream dataset $\calD=\{x_i, y_i\}_{i=1}^{n}$,
\[
F | \calD \sim \text{DP}\left(\alpha + n, \frac{1}{\alpha + n} \sum_{i=1}^n \delta_{(x_i, y_i)} + \frac{\alpha}{\alpha + n}F_\pi\right).
\]
When dealing with a continuous base measure, sampling exactly from the posterior DP is not possible with finite computational resources. Fortunately, the informative base measure $F_\pi$ defined in \cref{eq:base_measure} only involves finite inputs from the downstream dataset. This allows exact sampling using a finite Dirichlet distribution, where we only need to sample weights $(w_{1:n}, \widetilde{w}_{1:n})$ from a Dirichlet distribution defined over $\bbR_+^{2n}$ with concentration parameters $(1, \ldots, 1, \alpha/n, \ldots, \alpha/n)$. Here, $w_{1:n}$ and $\widetilde{w}_{1:n}$ denote the weights associated with the downstream data points and the pseudo data points originating from the base measure $F_\pi$ respectively.


The provided Dirichlet distribution, denoted as $\text{Dir}(1, \ldots, 1, \alpha/n, \ldots, \alpha/n)$, generates a weight vector $(w_{1:n},\widetilde{w}_{1:n})$ with a dimension of $2n$. However, due to the numerical issues, accurately generating these weight samples becomes challenging when the number of downstream training data points, denoted as $n$, becomes large. To address this issue, we adopt a block Dirichlet distribution~\citep{shin2021neural} to handle the dimensionality of the concentration parameter. To use a block Dirichlet distribution as an alternative to the non-block version, we need to map each element in the set $[n]$ to two pairs of $L$ blocks as we independently block the downstream dataset and the pseudo dataset. This mapping is performed by a function $u:[n]\rightarrow [L]\times [L]$, where $u(i) = (l_1, l_2)$. Here, we denote $u(i)[1]$ as $l_1$ and $u(i)[2]$ as $l_2$. 
We assign weights $w_i = v_{u(i)[1]}$ for the $i$th training data point and $\widetilde{w}_i$ is assigned as $\tilde{v}_{u(i)[2]}$ for the $i$th pseudo data point, where $(v_{1:L}, \tilde{v}_{1:L})\sim \text{Dir}(1, \ldots, 1, \alpha/n, \ldots, \alpha/n)$ in $\bbR_+^{2L}$. After finishing mapping $(w_{1:n},\tilde{w}_{1:n})$, the sum $\sum_{j=1}^n w_j + \sum_{k=1}^n \widetilde{w}_k$ becomes $n/L$ at this point. This index mapping function $u$ is randomly reconstructed for each posterior sample. Please refer to \cref{app:sec:blockDir} for details on the implementation and analysis regarding the asymptotic convergence between the target distributions of the blocked Dirichlet distribution and the non-blocked Dirichlet distribution.

\subsection{Sampling from posterior distribution}

To perform posterior sampling, we need to sample a total of $M$ instances of $F^{(m)}$ from the distribution $F | \calD$, where $m$ ranges from $1$ to $M$. We then need to solve $M$ instances of  \cref{eq:loss}, substituting $F^{(m)}$ for $F_0$. Since our measure has finite support, our objective can be expressed as:
\begin{align}
\label{main:eq:obj}
    \int \ell(\btheta;x,y)d F^{(m)}(x,y) = \sum_{j=1}^n w_j^{(m)}\ell (\btheta;x_j,y_j) + \sum_{k=1}^n \widetilde{w}_k^{(m)}\ell(\btheta;x_k,f_{\text{probed}}(x_k)).
\end{align}
As finding the global minima is impractical for deep neural networks and using the entire dataset demands significant memory resources, we have employed Stochastic Gradient Descent~\citep[SGD;][]{robbins1951stochastic} methods to optimize the objective described in \cref{main:eq:obj}. In order to stabilize the optimization procedure, we maintain the scale of the sum $\sum_{j=1}^n w_j^{(m)} + \sum_{k=1}^n \widetilde{w}_k^{(m)}$ as $2n$ by multiply $2L$ to the variables $(v_{1:L}^{(m)}, \tilde{v}_{1:L}^{(m)})$. This scaling adjustment can be easily achieved by sampling from a distribution, denoted as $(v_{1:L}^{(m)}, \tilde{v}_{1:L}^{(m)})\sim 2L\times\text{Dir}(1, \ldots, 1, \alpha/n, \ldots, \alpha/n)$.
Unlike posterior sampling methods based on MCMC, our approach does not require sequential sampling for each instance, allowing for parallelized posterior sampling. Please refer to \cref{algorithm/ours} for a summary of the NPTL algorithm.

\section{Related Works}
\label{main:sec:related_works}

\textbf{Posterior sampling in deep neural networks.}
While there exist several posterior sampling methods guaranteed by statistical theory, e.g. Hamiltonian Monte Carlo~\citep[HMC;][]{duane1987hybrid,neal2011mcmc}, they are ill-suited for the scale of modern deep neural networks. Consequently, recent advances in Bayesian deep learning have been established upon SGD based methods~\citep{robbins1951stochastic}, a foundation of modern machine learning algorithms. We can highlight the following research areas for Bayesian deep learning:
\textit{1) Variational inference} introduces variational distribution, which are tractable approximate posteriors, and then samples from them~\citep{graves2011practical,ranganath2014black,blundell2015weight,khan2018fast}.
\textit{2) Stochastic gradient Markov chain Monte Carlo} directly explores the posterior distribution using stochastic gradients by using various continuous dynamics~\citep{welling2011bayesian,chen2014stochastic,ma2015complete}.
\textit{3) Particle optimization variational inference} evolves a set of particles towards the high-density regions of the posterior distribution~\citep{liu2016stein,d2021repulsive}. However, most of the investigations into posterior sampling are conducted in \textit{from-scratch} scenarios, where models start with random initializations, and it is considered crucial to specify suitable priors even in the absence of observed data~\citep{fortuin2022priors}. Consequently, in transfer learning situations that utilize pre-training data, the significance of establishing priors that consider this becomes even more pronounced.

\textbf{Learning priors from data.}
Several works have proposed to learn a prior distribution from data. The work by \citet{krishnan2020specifying} focuses on adapting a prior distribution by maximizing the marginal likelihood using a Maximum Likelihood Estimation solution. On the other hand, \citet{fortuin2022bayesian} takes a different approach by using actual summary statistics of SGD trained weights as a weight prior distribution. 
\citet{tomczak2018vae} proposed a Variational Mixture of Posteriors (Vamp) prior, which utilizes a mixture of variational posteriors learned from data as a prior. 
While these approaches aim to create a carefully tailored weight prior distribution based on data, our method establishes an informative nonparametric prior to the true data distribution. This distinction enhances NPTL's resilience for the model misspecification scenarios.


\section{Experiments}
\label{main:sec:experiments}

In this section, we present empirical evidence that demonstrates the effectiveness of our proposed posterior sampling method in practical transfer learning scenarios. \cref{main:sec:experiments:vision} includes experiments conducted on vision tasks, while \cref{main:sec:experiments:language} contains experiments conducted on language tasks. In \cref{main:sec:experiments:soup}, we suggest a computationally efficient method to decrease inference cost while maintaining the performance of NPTL. Across all the result tables, a \BL{boldfaced underline} highlights the best value, while an \UL{underline} denotes the second-best value in each column. The last `Avg.' column summarizes the overall results for each method across all datasets or intensity levels.

As a baseline for assessing the quality of the posterior samples from our proposed NPTL method, we consider the following two representative algorithms for practically implementing BMA;
\textit{1) Stochastic Gradient Hamiltonian Monte Carlo (SGHMC):} It is an approximate way to simulate HMC by employing stochastic gradients and is guaranteed to have the same stationary distribution as the original HMC~\citep{chen2014stochastic}. Unless otherwise indicated, we utilized 30 posterior samples from the SGHMC sampler for calculating BMA.
\textit{2) Ensemble:} Despite its simplicity, it serves as a compelling approach to the practice of BMA~\citep{wilson2020bayesian}. Unless specified, we employed 10 model copies to make ensemble predictions.
In the baseline BMA procedure, we employed L2SP~\citep{xuhong2018explicit} and PTYL~\citep{shwartz2022pre} approaches, which provide Gaussian priors centered around the pre-trained solution. \cref{app:sec:exp:hparams} provides a more detailed explanation of these techniques and their specific hyperparameter settings.

\subsection{Empirical analysis on vision tasks}
\label{main:sec:experiments:vision}

Our experimental setups for visual tasks encompass the following pre-trained models:
\textit{1) ResNet-20x4}, a residual network with a depth of 20 and projection shortcuts~\citep{he2016deep}. To better adapt to the transfer learning scenario, we enhanced the model capacity by increasing the number of convolutional filters by four~\citep{zagoruyko2016wide}. We then performed supervised pre-training using the downsampled variant of the ImageNet dataset, where the image size is 32x32~\citep{chrabaszcz2017downsampled}.
\textit{2) ResNet-50}, a residual network with a depth of 50 and projection shortcuts~\citep{he2016deep}. Fine-tuning the ResNet-50 network, pre-trained on the ImageNet dataset with a 224x224 image size~\citep{russakovsky2015imagenet}, is a well-established procedure in computer vision experiments~\citep{girshick2014rich,he2019rethinking}.
\textit{3) ViT-B/16}, a vision transformer (ViT) with $16\times16$ input patch size~\citep{dosovitskiy2021an}. Considering the recent advancements in computer vision achieved through ViT architectures, we also contemplate scenarios in which we fine-tune the ViT-B/16 network, pre-trained on the ImageNet-21k dataset with a 224x224 image size.
We believe the experimental results obtained through these setups will be convincing to the readers.

We verify the effectiveness of our posterior sampling method on a range of downstream image classification tasks when utilizing the aforementioned pre-trained model in a transfer learning context. In addition to measuring classification accuracy (ACC), we also assess the performance of resulting categorical predictions using negative log-likelihood (NLL). 
NLL provides additional insights into the quality of the posterior predictive distribution obtained through the BMA procedure. Throughout the paper, the values displayed in the tables represent averages with standard deviations calculated over multiple runs. 
Please refer to \cref{app:sec:exp} for a thorough description of downstream datasets, evaluation metrics, and precise hyperparameter information.

\begin{table}[t]
    \centering
    \caption{\textbf{Main results with BMA for ResNet-20x4.} Evaluation results of NPTL and baseline methods for BMA on four image classification datasets, including \texttt{C10}, \texttt{C100}, \texttt{F101}, and \texttt{D120}. Results with standard deviations represent the average of three runs. 
    }
    \label{table/R20_bma}
    \setlength{\tabcolsep}{9pt}
    \resizebox{0.9\textwidth}{!}{\begin{tabular}{lllllll}
    \toprule
    & & \multicolumn{4}{c}{Datasets} \\
    \cmidrule(lr){3-6}
    Metric & Methods & \texttt{C10} & \texttt{C100} & \texttt{F101} & \texttt{D120} & Avg. \\
    \midrule
    \multirow{5}{*}{ACC ($\uparrow$)}
    & SGHMC + L2SP      & 0.955\PM{0.000} & 0.797\PM{0.000} & 0.625\PM{0.000} & 0.612\PM{0.001} & 0.747 \\
    & SGHMC + PTYL      & 0.958\PM{0.001} & 0.801\PM{0.000} & 0.632\PM{0.001} & 0.611\PM{0.008} & 0.751 \\
    & Ensemble + L2SP   & \UL{0.963}\PM{0.000} & \UL{0.815}\PM{0.000} & \BL{0.646}\PM{0.001} & 0.624\PM{0.003} & \UL{0.762} \\
    & Ensemble + PTYL   & 0.958\PM{0.001} & 0.806\PM{0.000} & 0.600\PM{0.003} & \UL{0.632}\PM{0.002} & 0.749 \\
    & NPTL (ours)        & \BL{0.964}\PM{0.001} & \BL{0.818}\PM{0.000} & \UL{0.644}\PM{0.002} & \BL{0.634}\PM{0.001} & \BL{0.765} \\
    \midrule
    \multirow{5}{*}{NLL ($\downarrow$)}
    & SGHMC + L2SP      & 0.138\PM{0.000} & 0.686\PM{0.000} & 1.447\PM{0.001} & 1.390\PM{0.002} & 0.915 \\
    & SGHMC + PTYL      & 0.128\PM{0.000} & 0.665\PM{0.001} & 1.420\PM{0.000} & 1.385\PM{0.037} & 0.900 \\
    & Ensemble + L2SP   & \UL{0.107}\PM{0.001} & \UL{0.617}\PM{0.001} & \BL{1.331}\PM{0.000} & 1.357\PM{0.000} & \UL{0.853} \\
    & Ensemble + PTYL   & 0.119\PM{0.000} & 0.644\PM{0.003} & 1.513\PM{0.000} & \UL{1.320}\PM{0.000} & 0.899 \\
    & NPTL (ours)        & \BL{0.102}\PM{0.001} & \BL{0.606}\PM{0.002} & \UL{1.347}\PM{0.001} & \BL{1.297}\PM{0.003} & \BL{0.838} \\
    \bottomrule
    \end{tabular}}
\end{table}

\begin{table}[t]
    \centering
    \caption{\textbf{Main results with BMA for ResNet-50.} Evaluation results of NPTL and baseline methods for BMA on five image classification datasets, including \texttt{B200}, \texttt{C101}, \texttt{D47}, \texttt{F102}, and \texttt{P37}. Results with standard deviations represent the average of three runs. 
    }
    \label{table/R50_bma}
    \resizebox{0.9\textwidth}{!}{
    \begin{tabular}{llllllll}
    \toprule
    & & \multicolumn{5}{c}{Datasets} \\
    \cmidrule(lr){3-7}
    Metric & Method  & \texttt{B200} & \texttt{C101} & \texttt{D47} & \texttt{F102} & \texttt{P37} & Avg. \\
    \midrule
    \multirow{5}{*}{ACC ($\uparrow$)}
    & SGHMC + L2SP    & 0.782\PM{0.001} & 0.887\PM{0.001} & 0.699\PM{0.003} & 0.903\PM{0.001} & \BL{0.930}\PM{0.001} & 0.840 \\
    & SGHMC + PTYL    & 0.784\PM{0.001} & 0.884\PM{0.005} & 0.697\PM{0.004} & 0.904\PM{0.001} & \BL{0.930}\PM{0.002} & 0.840 \\
    & Ensemble + L2SP & \UL{0.807}\PM{0.002} & \UL{0.899}\PM{0.007} & 0.702\PM{0.002} & \UL{0.918}\PM{0.002} & 0.924\PM{0.002} & \UL{0.850}\\
    & Ensemble + PTYL & 0.805\PM{0.001} & 0.895\PM{0.005} & \UL{0.704}\PM{0.003} & 0.917\PM{0.000} & 0.925\PM{0.003} & 0.849 \\
    & NPTL (ours)      & \BL{0.811}\PM{0.003} & \BL{0.901}\PM{0.002} & \BL{0.709}\PM{0.002} & \BL{0.921}\PM{0.001} & \BL{0.930}\PM{0.001} &\BL{0.854} \\
    \midrule
    \multirow{5}{*}{NLL ($\downarrow$)}
    & SGHMC + L2SP    & 0.868\PM{0.001} & 0.350\PM{0.000} & \UL{1.209}\PM{0.005} & 0.401\PM{0.003} & 0.240\PM{0.001} & 0.614 \\
    & SGHMC + PTYL    & 0.861\PM{0.003} & 0.358\PM{0.024} & 1.229\PM{0.005} & 0.392\PM{0.000} & \UL{0.235}\PM{0.002} & 0.615 \\
    & Ensemble + L2SP & 0.773\PM{0.005} & \UL{0.318}\PM{0.023} & 1.295\PM{0.005} & 0.327\PM{0.002} & 0.260\PM{0.006} & 0.595\\
    & Ensemble + PTYL & \UL{0.773}\PM{0.003} & 0.331\PM{0.015} & 1.288\PM{0.009} & \UL{0.323}\PM{0.001} & 0.253\PM{0.004} & \UL{0.594} \\
    & NPTL (ours)      & \BL{0.738}\PM{0.002} & \BL{0.317}\PM{0.003} & \BL{1.167}\PM{0.002} & \BL{0.313}\PM{0.001} & \BL{0.234}\PM{0.002} & \BL{0.554}\\
    \bottomrule
    \end{tabular}}
\end{table}

\begin{table}[t]
    \centering
    \caption{\textbf{Main results with BMA for ViT-B/16.} Evaluation results of NPTL and baseline methods for BMA on five image classification datasets, including \texttt{B200}, \texttt{C101}, \texttt{D47}, \texttt{F102}, and \texttt{P37}. Results with standard deviations represent the average of three runs. 
    }
    \label{table/B16_bma}
    \resizebox{0.9\textwidth}{!}{
    \begin{tabular}{llllllll}
    \toprule
    & & \multicolumn{5}{c}{Datasets} \\
    \cmidrule(lr){3-7}
    Metric & Method  & \texttt{B200} & \texttt{C101} & \texttt{D47} & \texttt{F102} & \texttt{P37} & Avg. \\
    \midrule
    \multirow{2}{*}{ACC ($\uparrow$)}
    & Ensemble + L2SP & 0.876\PM{0.002} & 0.905\PM{0.001} & 0.765\PM{0.003} & \BL{0.992}\PM{0.000} & 0.939\PM{0.001} & 0.895\\
    & NPTL (ours)     & \BL{0.885}\PM{0.001} & \BL{0.923}\PM{0.001} & \BL{0.770}\PM{0.002} & \BL{0.992}\PM{0.000} & \BL{0.944}\PM{0.000} & \BL{0.903} \\
    \midrule
    \multirow{2}{*}{NLL ($\downarrow$)}
    & Ensemble + L2SP & 0.457\PM{0.001} & 0.290\PM{0.022} & 1.005\PM{0.004} & 0.048\PM{0.000} & 0.200\PM{0.001} & 0.400\\
    & NPTL (ours)     & \BL{0.420}\PM{0.002} & \BL{0.264}\PM{0.005} & \BL{0.941}\PM{0.006} & \BL{0.049}\PM{0.001} & \BL{0.184}\PM{0.003} & \BL{0.372} \\
    \bottomrule
    \end{tabular}}
\end{table}

\textbf{Results for image classification tasks.}
We start by evaluating the performance of NPTL on image classification tasks, using the ResNet-20x4 model for the following four datasets with an image size of $32\times32$: \textit{1) CIFAR-10}, \textit{2) CIFAR-100}, \textit{3) Foods-101}, and \textit{4) Stanford Dogs}. For brevity, these datasets are abbreviated as \texttt{C10}, \texttt{C100}, \texttt{F101}, and \texttt{D120}, respectively. \cref{table/R20_bma} clearly demonstrates that NPTL outperforms the SGHMC baselines, which serve as a strong competitor in posterior sampling. Although it is widely accepted that ensembles of deep neural networks typically achieve superior performance compared to Bayesian methods~\citep{Ashukha2020Pitfalls,ovadia2019can}, NPTL demonstrates competitive outcomes in terms of ACC and outperforms them in terms of NLL when compared to the ensemble baselines.

We further verify the scalability of our approach through experiments involving the ResNet-50 and ViT-B/16 models. These experiments are conducted on the following five datasets with an image size of $224\times224$: \textit{1) Caltech-UCSD Birds 200}, \textit{2) Caltech-101}, \textit{3) Describable Textures Dataset}, \textit{4) Oxford Flowers 102}, and \textit{5) Oxford-IIIT Pet}. To simplify, we use the abbreviations \texttt{B200}, \texttt{C101}, \texttt{D47}, \texttt{F102}, and \texttt{P37} for these datasets, respectively. The results presented in \cref{table/R50_bma} and \cref{table/B16_bma} exhibit a consistent trend with the ResNet-20x4 case. It is evident that NPTL demonstrates a significant performance advantage over posterior sampling baselines and surpasses the ensemble baselines. These experimental results highlight that NPTL exhibits superior posterior sampling quality in terms of BMA performance in transfer learning scenarios for image classification tasks.

\textbf{Robustness to common corruptions.}
We also assess calibration performance by utilizing CIFAR-10-C, a corrupted version of CIFAR-10~\citep{hendrycks2019robustness}. This benchmark aims to evaluate the robustness of CIFAR-10 classification models when exposed to 15 common corruptions across five different severity levels. We used posterior samples or ensemble members from the CIFAR-10 classification task to evaluate the performances. \cref{table/R20_corruption} clearly shows our approach consistently outperforms other baselines across all intensity levels and evaluation metrics we measured. This result shows that posterior samples from NPTL are more robust on distribution shift between the training dataset and the evaluation dataset compared to other baselines. Please refer to \cref{app:sec:exp:corruption} for more detailed results, including the Expected Calibration Error~\citep[ECE;][]{naeini2015obtaining} metric for measuring the calibration.

\begin{table}[t]
\centering
\caption{\textbf{Results on CIFAR-10-C benchmark for ResNet-20x4.} Evaluation results of NPTL and baseline methods for BMA under five different levels of common corruptions. Results with standard deviations represent the average across 15 different corruption types. 
}
\label{table/R20_corruption}
\resizebox{0.9\textwidth}{!}{\begin{tabular}{clcccccc}
\toprule
& & \multicolumn{5}{c}{Intensity levels} \\
\cmidrule(lr){3-7}
Metrics & Methods & 1 & 2 & 3 & 4 & 5 & Avg.\\
\midrule
\multirow{5}{*}{ACC ($\uparrow$)}
    & SGHMC + L2SP    & 0.844\PM{0.144} & 0.785\PM{0.176} & 0.735\PM{0.202} & 0.667\PM{0.230} & 0.574\PM{0.236} & 0.721\\
    & SGHMC + PTYL    & 0.849\PM{0.144} & 0.791\PM{0.176} & 0.740\PM{0.205} & 0.670\PM{0.236} & 0.575\PM{0.243} & 0.725\\
    & Ensemble + L2SP & \UL{0.875}\PM{0.116} & \UL{0.823}\PM{0.150} & \UL{0.778}\PM{0.180} & \UL{0.713}\PM{0.211} & \UL{0.625}\PM{0.220} & \UL{0.763}\\
    & Ensemble + PTYL & 0.853\PM{0.137} & 0.797\PM{0.170} & 0.749\PM{0.197} & 0.681\PM{0.229} & 0.591\PM{0.233} & 0.734\\
    & NPTL (ours)      & \BL{0.890}\PM{0.098} & \BL{0.846}\PM{0.129} & \BL{0.805}\PM{0.162} & \BL{0.750}\PM{0.191} & \BL{0.672}\PM{0.198} & \BL{0.793}\\
\midrule
\multirow{5}{*}{NLL ($\downarrow$)}
    & SGHMC + L2SP    & 0.475\PM{0.445} & 0.643\PM{0.531} & 0.789\PM{0.606} & \UL{1.016}\PM{0.725} & \UL{1.339}\PM{0.799} & 0.852\\
    & SGHMC + PTYL    & 0.465\PM{0.455} & 0.631\PM{0.542} & \UL{0.777}\PM{0.619} & 1.017\PM{0.758} & 1.345\PM{0.829} & \UL{0.847}\\
    & Ensemble + L2SP & \UL{0.432}\PM{0.455} & \UL{0.621}\PM{0.586} & 0.798\PM{0.724} & 1.076\PM{0.889} & 1.455\PM{0.981} & 0.876\\
    & Ensemble + PTYL & 0.499\PM{0.531} & 0.703\PM{0.666} & 0.887\PM{0.790} & 1.185\PM{0.959} & 1.556\PM{1.031} & 0.966\\
    & NPTL (ours)      & \BL{0.348}\PM{0.342} & \BL{0.500}\PM{0.456} & \BL{0.645}\PM{0.592} & \BL{0.855}\PM{0.722} & \BL{1.142}\PM{0.787} & \BL{0.698}\\
\bottomrule
\end{tabular}}
\end{table}

\subsection{Empirical analysis on language tasks}
\label{main:sec:experiments:language}

\begin{table}[t]
    \centering
    \setlength{\tabcolsep}{1.2em}
    \caption{\textbf{Main results with BMA for RoBERTa-Base.} Evaluation results of NPTL and baseline methods for BMA on three text classification datasets, including \texttt{cola}, \texttt{mrpc}, and \texttt{rte}. Results with standard deviations represent the average of three runs. 
    $^\dagger$It denotes the use of commonly employed evaluation metrics for each dataset. See \cref{app:sec:exp} for further details.}
    \label{table/RoBERTa_bma}
    \resizebox{0.9\textwidth}{!}{\begin{tabular}{llllll}
    \toprule
    & & \multicolumn{3}{c}{Datasets} \\
    \cmidrule(lr){3-5}
    Metric & Method  & \texttt{cola} & \texttt{mrpc} & \texttt{rte} & Avg. \\
    \midrule
    \multirow{2}{*}{GLUE ($\uparrow$)$^\dagger$}
    & Ensemble + L2SP & 0.643\PM{0.003} & 0.930\PM{0.005} & 0.794\PM{0.006} & 0.789\\
    & NPTL (ours)     & \BL{0.645}\PM{0.006} & \BL{0.934}\PM{0.003} & \BL{0.812}\PM{0.006} & \BL{0.797}\\
    \midrule
    \multirow{2}{*}{NLL ($\downarrow$)}
    & Ensemble + L2SP & 0.452\PM{0.050} & 0.263\PM{0.030} & 0.518\PM{0.049} & 0.411\\
    & NPTL (ours)     & \BL{0.397}\PM{0.021} & \BL{0.245}\PM{0.009} & \BL{0.472}\PM{0.006} & \BL{0.371}\\
    \bottomrule
    \end{tabular}}
\end{table}

We further confirm that the applicability of our proposed approach beyond vision models by conducting experiments on language tasks. More precisely, we perform the BMA procedure when fine-tuning the pre-trained \textit{RoBERTa-Base} model~\citep{liu2019roberta} for the following three subtasks from the GLUE benchmark~\citep{wang2019glue}: \textit{1) CoLA}, \textit{2) MRPC}, and \textit{3) RTE}. These datasets have relatively limited training data (8.55k, 3.67k, and 2.49k, respectively), making the transfer learning process particularly crucial.

\cref{table/RoBERTa_bma} provides evidence that our proposed NPTL results in a decrease in NLL, implying that it offers an enhanced approach to compute BMA in comparison to the ensemble baseline. In addition to assessing NLL, which measures the quality of the posterior samples used in the BMA process, we also present results using specific evaluation metrics tailored to each dataset within the GLUE benchmark. It is worth noting that our NPTL method also outperforms in terms of GLUE metrics. Here, we have omitted the SGHMC baseline because practical experiments have shown that using SGD with momentum in transformer architectures yields subpar outcomes~\citep{zhang2020adaptive,liu2020understanding}. However, we believe the \textit{Ensemble} approach would serve as a strong baseline, as demonstrated before in vision experiments. Please refer to \cref{app:sec:exp}, for more information regarding datasets, evaluation metrics, and specific hyperparameter details.

\subsection{Soups of NPL samples}
\label{main:sec:experiments:soup}
\begin{table}[t]
    \centering
    \setlength{\tabcolsep}{0.5em}
    \caption{\textbf{Results for soups of NPTL posterior samples.} Evaluation results of NPTL-Soup compared to NPTL. NPTL-Soup requires only one forward pass, resulting in about ten times lower inference cost than the BMA computation of NPTL. 
    $^\dagger$It represents the best value of each metric for each test dataset among the ten fine-tuned models, which can be seen as an unfair advantage given to the `Fine-tune' baseline.}
    \label{table/R50_soup}
    \resizebox{0.9\textwidth}{!}{
    \begin{tabular}{clllllll}
    \toprule
    & & \multicolumn{5}{c}{Datasets} \\
    \cmidrule(lr){3-7}
    Metrics & Method  & \texttt{B200} & \texttt{C101} & \texttt{D47} & \texttt{F102} & \texttt{P37} & Avg. \\
    \midrule
    \multirow{4}{*}{ACC}
    & NPTL                          & \BL{0.811}\PM{0.003} & \BL{0.901}\PM{0.002} & \BL{0.709}\PM{0.002} & \BL{0.921}\PM{0.001} & \BL{0.930}\PM{0.001} & \BL{0.854} \\
    & NPTL-Soup                     & \UL{0.800}\PM{0.007} & \UL{0.894}\PM{0.004} & \UL{0.705}\PM{0.004} & \UL{0.917}\PM{0.001} & \UL{0.923}\PM{0.003} & \UL{0.848} \\
    & Fine-tune                     & 0.780\PM{0.006} & 0.879\PM{0.009} & 0.681\PM{0.007} & 0.909\PM{0.003} & 0.906\PM{0.006} & 0.831 \\
    & Fine-tune (oracle)$\smash{^\dagger}$  & 0.785 & 0.893 & 0.695 & 0.912 & 0.921 & 0.841 \\
    \midrule
    \multirow{4}{*}{NLL}
    & NPTL                          & \BL{0.738}\PM{0.002} & \BL{0.317}\PM{0.003} & \BL{1.167}\PM{0.002} & \BL{0.313}\PM{0.001} & \BL{0.234}\PM{0.002} & \BL{0.554} \\
    & NPTL-Soup                     & \UL{0.755}\PM{0.030} & 0.357\PM{0.031} & \UL{1.329}\PM{0.013} & \UL{0.321}\PM{0.006} & \UL{0.254}\PM{0.009} & \UL{0.603} \\
    & Fine-tune                     & 0.881\PM{0.019} & 0.424\PM{0.057} & 1.683\PM{0.038} & 0.358\PM{0.016} & 0.349\PM{0.032} & 0.739 \\
    & Fine-tune (oracle)$\smash{^\dagger}$  & 0.862 & \UL{0.340} & 1.612 & 0.334 & 0.281 & 0.686 \\
    \bottomrule
    \end{tabular}}
\end{table}

Despite the theoretical basis of the BMA procedure, a factor that hinders its practicality is the inference cost - the multiple forward passes required for multiple model copies in BMA calculations make applications in resource-constrained real-world deployments challenging. Several methods have been suggested to mitigate this cost issue in ensemble modeling~\citep{izmailov2018averaging,malinin2019ensemble,hobbhahn2022fast}, and we further introduce NPTL-Soup, a practical variation of NPTL involving weight averaging of posterior samples from NPTL, in line with those attempts.

Notably, in the transfer learning scenario, fine-tuned solutions from the same initial pre-trained model tend to reside in the same basin on the loss surface~\citep{neyshabur2020being}, and we can thus obtain a single solution achieving competitive performance by appropriately averaging weights of NPTL posterior samples. We implemented this NPTL-Soup procedure by employing the Greedy Soup algorithm~\citep{wortsman2022model}.

\cref{table/R50_soup} demonstrates that NPTL-Soup can achieve performance competitive to that of the BMA computation while reducing the computational demands. Notably, NPTL-Soup achieves superior results compared to the `Fine-tune (oracle)' baseline, which represents the best-performing fine-tuned solution in terms of \textit{test} performance. While NPTL-Soup does not strictly carry out the BMA procedure, its importance lies in offering a practical \textit{single} solution through the weight-space ensemble of samples with a high posterior density over neural network weights. One could apply alternative methods such as ensemble distillation~\citep{malinin2019ensemble} to mitigate the inherent inference cost associated with the BMA procedure. However, we emphasize that such approaches are not the primary focus of our paper, which primarily introduces the posterior sampling method. We consider them as potential avenues for future research.

\section{Conclusion}
\label{main:sec:conclusion}
In this paper, we proposed a novel posterior sampling approach for transfer learning scenarios, which we call NPTL. NPTL leverages the nonparametric learning~\citep{lyddon2018nonparametric, fong2019scalable} framework, which effectively accounts for the model misspecification by adapting a nonparametric prior. Our method involves constructing an informative base measure using empirical Bayes techniques and proposing a numerically stable posterior sampling algorithm based on the block Dirichlet distribution. We conducted empirical validations across various tasks and models to validate the performance of NPTL. The results consistently demonstrate that NPTL outperforms other existing methods, indicating its superior ability to produce high-quality posterior samples in transfer learning scenarios.

\clearpage
\newpage

\textbf{Ethics statement.}
This paper does not include any ethical issues. This paper proposes a posterior sampling algorithm for the transfer learning scenario which does not cause ethical issues.
 
\textbf{Reproducibility statement.}
We described our experimental details in \cref{app:sec:exp} which covers dataset description, used libraries, and hardware.

\section*{Acknowledgement} 
This work was partly supported by
Institute of Information \& communications Technology Promotion(IITP) grant funded by the Korea government(MSIT)(No.2019-0-00075, Artificial Intelligence Graduate School Program(KAIST)),
the National Research Foundation of Korea(NRF) grant funded by the Korea government(MSIT) (NRF-2022R1A5A708390812, NRF-2021M3E5D9025030),
and KAIST-NAVER Hypercreative AI Center.

\bibliography{references}

\begin{thebibliography}{92}
\providecommand{\natexlab}[1]{#1}
\providecommand{\url}[1]{\texttt{#1}}
\expandafter\ifx\csname urlstyle\endcsname\relax
  \providecommand{\doi}[1]{doi: #1}\else
  \providecommand{\doi}{doi: \begingroup \urlstyle{rm}\Url}\fi

\bibitem[Abadi et~al.(2015)Abadi, Agarwal, Barham, Brevdo, Chen, Citro, Corrado, Davis, Dean, Devin, Ghemawat, Goodfellow, Harp, Irving, Isard, Jia, Jozefowicz, Kaiser, Kudlur, Levenberg, Man\'{e}, Monga, Moore, Murray, Olah, Schuster, Shlens, Steiner, Sutskever, Talwar, Tucker, Vanhoucke, Vasudevan, Vi\'{e}gas, Vinyals, Warden, Wattenberg, Wicke, Yu, and Zheng]{tensorflow2015-whitepaper}
Mart\'{i}n Abadi, Ashish Agarwal, Paul Barham, Eugene Brevdo, Zhifeng Chen, Craig Citro, Greg~S. Corrado, Andy Davis, Jeffrey Dean, Matthieu Devin, Sanjay Ghemawat, Ian Goodfellow, Andrew Harp, Geoffrey Irving, Michael Isard, Yangqing Jia, Rafal Jozefowicz, Lukasz Kaiser, Manjunath Kudlur, Josh Levenberg, Dandelion Man\'{e}, Rajat Monga, Sherry Moore, Derek Murray, Chris Olah, Mike Schuster, Jonathon Shlens, Benoit Steiner, Ilya Sutskever, Kunal Talwar, Paul Tucker, Vincent Vanhoucke, Vijay Vasudevan, Fernanda Vi\'{e}gas, Oriol Vinyals, Pete Warden, Martin Wattenberg, Martin Wicke, Yuan Yu, and Xiaoqiang Zheng.
\newblock {TensorFlow}: Large-scale machine learning on heterogeneous systems, 2015.
\newblock URL \url{https://www.tensorflow.org/}.
\newblock Software available from tensorflow.org.

\bibitem[Antoniak(1974)]{antoniak1974mixtures}
Charles~E Antoniak.
\newblock Mixtures of dirichlet processes with applications to bayesian nonparametric problems.
\newblock \emph{The annals of statistics}, pp.\  1152--1174, 1974.

\bibitem[Ashukha et~al.(2020)Ashukha, Lyzhov, Molchanov, and Vetrov]{Ashukha2020Pitfalls}
Arsenii Ashukha, Alexander Lyzhov, Dmitry Molchanov, and Dmitry Vetrov.
\newblock Pitfalls of in-domain uncertainty estimation and ensembling in deep learning.
\newblock In \emph{International Conference on Learning Representations (ICLR)}, 2020.

\bibitem[Babuschkin et~al.(2020)Babuschkin, Baumli, Bell, Bhupatiraju, Bruce, Buchlovsky, Budden, Cai, Clark, Danihelka, Dedieu, Fantacci, Godwin, Jones, Hemsley, Hennigan, Hessel, Hou, Kapturowski, Keck, Kemaev, King, Kunesch, Martens, Merzic, Mikulik, Norman, Papamakarios, Quan, Ring, Ruiz, Sanchez, Schneider, Sezener, Spencer, Srinivasan, Stokowiec, Wang, Zhou, and Viola]{deepmind2020jax}
Igor Babuschkin, Kate Baumli, Alison Bell, Surya Bhupatiraju, Jake Bruce, Peter Buchlovsky, David Budden, Trevor Cai, Aidan Clark, Ivo Danihelka, Antoine Dedieu, Claudio Fantacci, Jonathan Godwin, Chris Jones, Ross Hemsley, Tom Hennigan, Matteo Hessel, Shaobo Hou, Steven Kapturowski, Thomas Keck, Iurii Kemaev, Michael King, Markus Kunesch, Lena Martens, Hamza Merzic, Vladimir Mikulik, Tamara Norman, George Papamakarios, John Quan, Roman Ring, Francisco Ruiz, Alvaro Sanchez, Rosalia Schneider, Eren Sezener, Stephen Spencer, Srivatsan Srinivasan, Wojciech Stokowiec, Luyu Wang, Guangyao Zhou, and Fabio Viola.
\newblock The {D}eep{M}ind {JAX} {E}cosystem, 2020.
\newblock URL \url{http://github.com/deepmind}.

\bibitem[Bar~Haim et~al.(2006)Bar~Haim, Dagan, Dolan, Ferro, Giampiccolo, Magnini, and Szpektor]{bar2006second}
Roy Bar~Haim, Ido Dagan, Bill Dolan, Lisa Ferro, Danilo Giampiccolo, Bernardo Magnini, and Idan Szpektor.
\newblock The second {PASCAL} recognising textual entailment challenge, 2006.

\bibitem[Bentivogli et~al.(2009)Bentivogli, Dagan, Dang, Giampiccolo, and Magnini]{bentivogli2009fifth}
Luisa Bentivogli, Ido Dagan, Hoa~Trang Dang, Danilo Giampiccolo, and Bernardo Magnini.
\newblock The fifth {PASCAL} recognizing textual entailment challenge, 2009.

\bibitem[Blundell et~al.(2015)Blundell, Cornebise, Kavukcuoglu, and Wierstra]{blundell2015weight}
Charles Blundell, Julien Cornebise, Koray Kavukcuoglu, and Daan Wierstra.
\newblock Weight uncertainty in neural network.
\newblock In \emph{Proceedings of The 32nd International Conference on Machine Learning (ICML)}, 2015.

\bibitem[Bossard et~al.(2014)Bossard, Guillaumin, and Van~Gool]{bossard14}
Lukas Bossard, Matthieu Guillaumin, and Luc Van~Gool.
\newblock Food-101 -- mining discriminative components with random forests.
\newblock In \emph{European Conference on Computer Vision (ECCV)}, 2014.

\bibitem[Bradbury et~al.(2018)Bradbury, Frostig, Hawkins, Johnson, Leary, Maclaurin, Necula, Paszke, Vander{P}las, Wanderman-{M}ilne, and Zhang]{jax2018github}
James Bradbury, Roy Frostig, Peter Hawkins, Matthew~James Johnson, Chris Leary, Dougal Maclaurin, George Necula, Adam Paszke, Jake Vander{P}las, Skye Wanderman-{M}ilne, and Qiao Zhang.
\newblock {JAX}: composable transformations of {P}ython+{N}um{P}y programs, 2018.
\newblock URL \url{http://github.com/google/jax}.

\bibitem[Chelba \& Acero(2006)Chelba and Acero]{chelba2006adaptation}
Ciprian Chelba and Alex Acero.
\newblock Adaptation of maximum entropy capitalizer: Little data can help a lot.
\newblock \emph{Computer Speech \& Language}, 20\penalty0 (4):\penalty0 382--399, 2006.

\bibitem[Chen et~al.(2017)Chen, Papandreou, Schroff, and Adam]{chen2017rethinking}
Liang-Chieh Chen, George Papandreou, Florian Schroff, and Hartwig Adam.
\newblock Rethinking atrous convolution for semantic image segmentation.
\newblock \emph{arXiv preprint arXiv:1706.05587}, 2017.

\bibitem[Chen et~al.(2014)Chen, Fox, and Guestrin]{chen2014stochastic}
Tianqi Chen, Emily Fox, and Carlos Guestrin.
\newblock Stochastic gradient hamiltonian monte carlo.
\newblock In \emph{Proceedings of The 31st International Conference on Machine Learning (ICML)}, 2014.

\bibitem[Chrabaszcz et~al.(2017)Chrabaszcz, Loshchilov, and Hutter]{chrabaszcz2017downsampled}
Patryk Chrabaszcz, Ilya Loshchilov, and Frank Hutter.
\newblock A downsampled variant of imagenet as an alternative to the cifar datasets.
\newblock \emph{arXiv preprint arXiv:1707.08819}, 2017.

\bibitem[Cimpoi et~al.(2014)Cimpoi, Maji, Kokkinos, Mohamed, , and Vedaldi]{cimpoi14describing}
M.~Cimpoi, S.~Maji, I.~Kokkinos, S.~Mohamed, , and A.~Vedaldi.
\newblock Describing textures in the wild.
\newblock In \emph{Proceedings of the IEEE Conference on Computer Vision and Pattern Recognition (CVPR)}, 2014.

\bibitem[Dagan et~al.(2006)Dagan, Glickman, and Magnini]{dagan2006pascal}
Ido Dagan, Oren Glickman, and Bernardo Magnini.
\newblock The {PASCAL} recognising textual entailment challenge.
\newblock In \emph{Machine learning challenges. evaluating predictive uncertainty, visual object classification, and recognising tectual entailment}, pp.\  177--190. Springer, 2006.

\bibitem[D'Angelo \& Fortuin(2021)D'Angelo and Fortuin]{d2021repulsive}
Francesco D'Angelo and Vincent Fortuin.
\newblock Repulsive deep ensembles are bayesian.
\newblock In \emph{Advances in Neural Information Processing Systems (NeurIPS)}, 2021.

\bibitem[Daum{\'e}~III(2007)]{daume2007frustratingly}
Hal Daum{\'e}~III.
\newblock Frustratingly easy domain adaptation.
\newblock In \emph{Proceedings of the 45th Annual Meeting of the Association of Computational Linguistics}, pp.\  256--263, 2007.

\bibitem[De~Souza et~al.(2022)De~Souza, Mesquita, Kaski, and Acerbi]{de2022parallel}
Daniel~A De~Souza, Diego Mesquita, Samuel Kaski, and Luigi Acerbi.
\newblock Parallel mcmc without embarrassing failures.
\newblock In \emph{International Conference on Artificial Intelligence and Statistics}, pp.\  1786--1804. PMLR, 2022.

\bibitem[Dolan \& Brockett(2005)Dolan and Brockett]{dolan2005automatically}
William~B Dolan and Chris Brockett.
\newblock Automatically constructing a corpus of sentential paraphrases.
\newblock In \emph{Proceedings of the International Workshop on Paraphrasing}, 2005.

\bibitem[Dosovitskiy et~al.(2021)Dosovitskiy, Beyer, Kolesnikov, Weissenborn, Zhai, Unterthiner, Dehghani, Minderer, Heigold, Gelly, Uszkoreit, and Houlsby]{dosovitskiy2021an}
Alexey Dosovitskiy, Lucas Beyer, Alexander Kolesnikov, Dirk Weissenborn, Xiaohua Zhai, Thomas Unterthiner, Mostafa Dehghani, Matthias Minderer, Georg Heigold, Sylvain Gelly, Jakob Uszkoreit, and Neil Houlsby.
\newblock An image is worth 16x16 words: Transformers for image recognition at scale.
\newblock In \emph{International Conference on Learning Representations (ICLR)}, 2021.

\bibitem[Duane et~al.(1987)Duane, Kennedy, Pendleton, and Roweth]{duane1987hybrid}
Simon Duane, Anthony~D Kennedy, Brian~J Pendleton, and Duncan Roweth.
\newblock Hybrid monte carlo.
\newblock \emph{Physics letters B}, 195\penalty0 (2):\penalty0 216--222, 1987.

\bibitem[Elfwing et~al.(2018)Elfwing, Uchibe, and Doya]{elfwing2018sigmoid}
Stefan Elfwing, Eiji Uchibe, and Kenji Doya.
\newblock Sigmoid-weighted linear units for neural network function approximation in reinforcement learning.
\newblock \emph{Neural Networks}, 107:\penalty0 3--11, 2018.

\bibitem[Ferguson(1973)]{ferguson1973bayesian}
Thomas~S Ferguson.
\newblock A bayesian analysis of some nonparametric problems.
\newblock \emph{The annals of statistics}, pp.\  209--230, 1973.

\bibitem[Fong \& Holmes(2020)Fong and Holmes]{fong2020marginal}
Edwin Fong and Chris~C Holmes.
\newblock On the marginal likelihood and cross-validation.
\newblock \emph{Biometrika}, 107\penalty0 (2):\penalty0 489--496, 2020.

\bibitem[Fong et~al.(2019)Fong, Lyddon, and Holmes]{fong2019scalable}
Edwin Fong, Simon Lyddon, and Chris Holmes.
\newblock Scalable nonparametric sampling from multimodal posteriors with the posterior bootstrap.
\newblock In \emph{Proceedings of The 36th International Conference on Machine Learning (ICML)}, 2019.

\bibitem[Fortuin(2022)]{fortuin2022priors}
Vincent Fortuin.
\newblock Priors in bayesian deep learning: A review.
\newblock \emph{International Statistical Review}, 90\penalty0 (3):\penalty0 563--591, 2022.

\bibitem[Fortuin et~al.(2022)Fortuin, Garriga-Alonso, Ober, Wenzel, Ratsch, Turner, van~der Wilk, and Aitchison]{fortuin2022bayesian}
Vincent Fortuin, Adri{\`a} Garriga-Alonso, Sebastian~W. Ober, Florian Wenzel, Gunnar Ratsch, Richard~E Turner, Mark van~der Wilk, and Laurence Aitchison.
\newblock Bayesian neural network priors revisited.
\newblock In \emph{International Conference on Learning Representations (ICLR)}, 2022.

\bibitem[Ghosal \& Van~der Vaart(2017)Ghosal and Van~der Vaart]{ghosal2017fundamentals}
Subhashis Ghosal and Aad Van~der Vaart.
\newblock \emph{Fundamentals of nonparametric Bayesian inference}, volume~44.
\newblock Cambridge University Press, 2017.

\bibitem[Giampiccolo et~al.(2007)Giampiccolo, Magnini, Dagan, and Dolan]{giampiccolo2007third}
Danilo Giampiccolo, Bernardo Magnini, Ido Dagan, and Bill Dolan.
\newblock The third {PASCAL} recognizing textual entailment challenge.
\newblock In \emph{Proceedings of the ACL-PASCAL workshop on textual entailment and paraphrasing}, pp.\  1--9. Association for Computational Linguistics, 2007.

\bibitem[Girshick et~al.(2014)Girshick, Donahue, Darrell, and Malik]{girshick2014rich}
Ross Girshick, Jeff Donahue, Trevor Darrell, and Jitendra Malik.
\newblock Rich feature hierarchies for accurate object detection and semantic segmentation.
\newblock In \emph{Proceedings of the IEEE Conference on Computer Vision and Pattern Recognition (CVPR)}, 2014.

\bibitem[Grachten \& Chac{\'o}n(2017)Grachten and Chac{\'o}n]{grachten2017strategies}
Maarten Grachten and Carlos Eduardo~Cancino Chac{\'o}n.
\newblock Strategies for conceptual change in convolutional neural networks.
\newblock \emph{arXiv preprint arXiv:1711.01634}, 2017.

\bibitem[Graves(2011)]{graves2011practical}
Alex Graves.
\newblock Practical variational inference for neural networks.
\newblock In \emph{Advances in Neural Information Processing Systems (NIPS)}, 2011.

\bibitem[He et~al.(2016)He, Zhang, Ren, and Sun]{he2016deep}
Kaiming He, Xiangyu Zhang, Shaoqing Ren, and Jian Sun.
\newblock Deep residual learning for image recognition.
\newblock In \emph{Proceedings of the IEEE Conference on Computer Vision and Pattern Recognition (CVPR)}, 2016.

\bibitem[He et~al.(2019)He, Girshick, and Doll{\'a}r]{he2019rethinking}
Kaiming He, Ross Girshick, and Piotr Doll{\'a}r.
\newblock Rethinking imagenet pre-training.
\newblock In \emph{Proceedings of the IEEE/CVF International Conference on Computer Vision (ICCV)}, 2019.

\bibitem[Hendrycks \& Dietterich(2019)Hendrycks and Dietterich]{hendrycks2019robustness}
Dan Hendrycks and Thomas Dietterich.
\newblock Benchmarking neural network robustness to common corruptions and perturbations.
\newblock In \emph{International Conference on Learning Representations (ICLR)}, 2019.

\bibitem[Hendrycks \& Gimpel(2016)Hendrycks and Gimpel]{hendrycks2016gaussian}
Dan Hendrycks and Kevin Gimpel.
\newblock Gaussian error linear units (gelus).
\newblock \emph{arXiv preprint arXiv:1606.08415}, 2016.

\bibitem[Hobbhahn et~al.(2022)Hobbhahn, Kristiadi, and Hennig]{hobbhahn2022fast}
Marius Hobbhahn, Agustinus Kristiadi, and Philipp Hennig.
\newblock Fast predictive uncertainty for classification with bayesian deep networks.
\newblock In \emph{Uncertainty in Artificial Intelligence}, pp.\  822--832. PMLR, 2022.

\bibitem[Hoeting et~al.(1999)Hoeting, Madigan, Raftery, and Volinsky]{hoeting1999bayesian}
J.~A. Hoeting, D.~Madigan, A.~E. Raftery, and C.~T. Volinsky.
\newblock {Bayesian} model averaging: a tutorial.
\newblock \emph{Statistical Science}, 14\penalty0 (4):\penalty0 382--401, 1999.

\bibitem[Holmes \& Walker(2019)Holmes and Walker]{lyddon2019general}
CC~Holmes and SG~Walker.
\newblock General bayesian updating and the loss-likelihood bootstrap.
\newblock \emph{Biometrika}, 106\penalty0 (2):\penalty0 465--478, 2019.

\bibitem[Ioffe \& Szegedy(2015)Ioffe and Szegedy]{ioffe2015batch}
S.~Ioffe and C.~Szegedy.
\newblock Batch normalization: Accelerating deep network training by reducing internal covariate shift.
\newblock In \emph{Proceedings of The 32nd International Conference on Machine Learning (ICML)}, 2015.

\bibitem[Izmailov et~al.(2018)Izmailov, Wilson, Podoprikhin, Vetrov, and Garipov]{izmailov2018averaging}
P~Izmailov, AG~Wilson, D~Podoprikhin, D~Vetrov, and T~Garipov.
\newblock Averaging weights leads to wider optima and better generalization.
\newblock In \emph{34th Conference on Uncertainty in Artificial Intelligence 2018, UAI 2018}, pp.\  876--885, 2018.

\bibitem[Izmailov et~al.(2021)Izmailov, Vikram, Hoffman, and Wilson]{izmailov2021bayesian}
Pavel Izmailov, Sharad Vikram, Matthew~D Hoffman, and Andrew Gordon~Gordon Wilson.
\newblock What are bayesian neural network posteriors really like?
\newblock In \emph{Proceedings of The 38th International Conference on Machine Learning (ICML)}, 2021.

\bibitem[Khan et~al.(2018)Khan, Nielsen, Tangkaratt, Lin, Gal, and Srivastava]{khan2018fast}
Mohammad Khan, Didrik Nielsen, Voot Tangkaratt, Wu~Lin, Yarin Gal, and Akash Srivastava.
\newblock Fast and scalable bayesian deep learning by weight-perturbation in adam.
\newblock In \emph{Proceedings of The 35th International Conference on Machine Learning (ICML)}, 2018.

\bibitem[Khosla et~al.(2011)Khosla, Jayadevaprakash, Yao, and Fei-Fei]{KhoslaYaoJayadevaprakashFeiFei_FGVC2011}
Aditya Khosla, Nityananda Jayadevaprakash, Bangpeng Yao, and Li~Fei-Fei.
\newblock Novel dataset for fine-grained image categorization.
\newblock In \emph{First Workshop on Fine-Grained Visual Categorization, IEEE Conference on Computer Vision and Pattern Recognition}, Colorado Springs, CO, June 2011.

\bibitem[Kingma \& Ba(2015)Kingma and Ba]{kingma2015adam}
Diederick~P Kingma and Jimmy Ba.
\newblock Adam: A method for stochastic optimization.
\newblock In \emph{International Conference on Learning Representations (ICLR)}, 2015.

\bibitem[Krishnan et~al.(2020)Krishnan, Subedar, and Tickoo]{krishnan2020specifying}
Ranganath Krishnan, Mahesh Subedar, and Omesh Tickoo.
\newblock Specifying weight priors in bayesian deep neural networks with empirical bayes.
\newblock In \emph{Proceedings of the AAAI Conference on Artificial Intelligence}, pp.\  4477--4484, 2020.

\bibitem[Krizhevsky et~al.(2009)Krizhevsky, Hinton, et~al.]{krizhevsky2009learning}
Alex Krizhevsky, Geoffrey Hinton, et~al.
\newblock Learning multiple layers of features from tiny images, 2009.

\bibitem[Krogh \& Hertz(1991)Krogh and Hertz]{krogh1991simple}
Anders Krogh and John Hertz.
\newblock A simple weight decay can improve generalization.
\newblock In \emph{Advances in Neural Information Processing Systems (NIPS)}, 1991.

\bibitem[Li et~al.(2022)Li, Andreeto, Ranzato, and Perona]{li_andreeto_ranzato_perona_2022}
Fei-Fei Li, Marco Andreeto, Marc'Aurelio Ranzato, and Pietro Perona.
\newblock Caltech 101, Apr 2022.

\bibitem[Li et~al.(2020)Li, Chaudhari, Yang, Lam, Ravichandran, Bhotika, and Soatto]{li2020rethinking}
Hao Li, Pratik Chaudhari, Hao Yang, Michael Lam, Avinash Ravichandran, Rahul Bhotika, and Stefano Soatto.
\newblock Rethinking the hyperparameters for fine-tuning.
\newblock In \emph{International Conference on Learning Representations (ICLR)}, 2020.

\bibitem[Liu et~al.(2020)Liu, Liu, Gao, Chen, and Han]{liu2020understanding}
Liyuan Liu, Xiaodong Liu, Jianfeng Gao, Weizhu Chen, and Jiawei Han.
\newblock Understanding the difficulty of training transformers.
\newblock In \emph{Proceedings of the 2020 Conference on Empirical Methods in Natural Language Processing (EMNLP)}, pp.\  5747--5763, 2020.

\bibitem[Liu \& Wang(2016)Liu and Wang]{liu2016stein}
Qiang Liu and Dilin Wang.
\newblock Stein variational gradient descent: A general purpose bayesian inference algorithm.
\newblock In \emph{Advances in Neural Information Processing Systems (NIPS)}, 2016.

\bibitem[Liu et~al.(2019)Liu, Ott, Goyal, Du, Joshi, Chen, Levy, Lewis, Zettlemoyer, and Stoyanov]{liu2019roberta}
Yinhan Liu, Myle Ott, Naman Goyal, Jingfei Du, Mandar Joshi, Danqi Chen, Omer Levy, Mike Lewis, Luke Zettlemoyer, and Veselin Stoyanov.
\newblock Roberta: A robustly optimized bert pretraining approach.
\newblock \emph{arXiv preprint arXiv:1907.11692}, 2019.

\bibitem[Lyddon et~al.(2018)Lyddon, Walker, and Holmes]{lyddon2018nonparametric}
Simon Lyddon, Stephen Walker, and Chris~C Holmes.
\newblock Nonparametric learning from bayesian models with randomized objective functions.
\newblock In \emph{Advances in Neural Information Processing Systems (NeurIPS)}, 2018.

\bibitem[Ma et~al.(2015)Ma, Chen, and Fox]{ma2015complete}
Yi-An Ma, Tianqi Chen, and Emily Fox.
\newblock A complete recipe for stochastic gradient mcmc.
\newblock In \emph{Advances in Neural Information Processing Systems (NIPS)}, 2015.

\bibitem[Maddox et~al.(2019)Maddox, Garipov, Izmailov, Vetrov, and Wilson]{maddox2019simple}
W.~Maddox, T.~Garipov, P.~Izmailov, D.~Vetrov, and A.~G. Wilson.
\newblock A simple baseline for {Bayesian} uncertainty in deep learning.
\newblock In \emph{Advances in Neural Information Processing Systems 32 (NeurIPS 2019)}, 2019.

\bibitem[Malinin et~al.(2020)Malinin, Mlodozeniec, and Gales]{malinin2019ensemble}
Andrey Malinin, Bruno Mlodozeniec, and Mark Gales.
\newblock Ensemble distribution distillation.
\newblock In \emph{International Conference on Learning Representations (ICLR)}, 2020.

\bibitem[Mikolov et~al.(2013)Mikolov, Chen, Corrado, and Dean]{mikolov2013efficient}
Tomas Mikolov, Kai Chen, Greg Corrado, and Jeffrey Dean.
\newblock Efficient estimation of word representations in vector space.
\newblock In \emph{International Conference on Learning Representations (ICLR)}, 2013.

\bibitem[M{\"u}ller(2013)]{muller2013risk}
Ulrich~K M{\"u}ller.
\newblock Risk of bayesian inference in misspecified models, and the sandwich covariance matrix.
\newblock \emph{Econometrica}, 81\penalty0 (5):\penalty0 1805--1849, 2013.

\bibitem[Naeini et~al.(2015)Naeini, Cooper, and Hauskrecht]{naeini2015obtaining}
Mahdi~Pakdaman Naeini, Gregory Cooper, and Milos Hauskrecht.
\newblock Obtaining well calibrated probabilities using bayesian binning.
\newblock In \emph{Proceedings of the AAAI conference on artificial intelligence}, volume~29, 2015.

\bibitem[Neal et~al.(2011)]{neal2011mcmc}
Radford~M Neal et~al.
\newblock Mcmc using hamiltonian dynamics.
\newblock \emph{Handbook of markov chain monte carlo}, 2\penalty0 (11):\penalty0 2, 2011.

\bibitem[Neiswanger et~al.(2014)Neiswanger, Wang, and Xing]{neiswanger2014asymptotically}
Willie Neiswanger, Chong Wang, and Eric~P Xing.
\newblock Asymptotically exact, embarrassingly parallel mcmc.
\newblock In \emph{Proceedings of the Thirtieth Conference on Uncertainty in Artificial Intelligence}, pp.\  623--632, 2014.

\bibitem[Neyshabur et~al.(2020)Neyshabur, Sedghi, and Zhang]{neyshabur2020being}
Behnam Neyshabur, Hanie Sedghi, and Chiyuan Zhang.
\newblock What is being transferred in transfer learning?
\newblock In \emph{Advances in Neural Information Processing Systems (NeurIPS)}, 2020.

\bibitem[Nilsback \& Zisserman(2008)Nilsback and Zisserman]{4756141}
Maria-Elena Nilsback and Andrew Zisserman.
\newblock Automated flower classification over a large number of classes.
\newblock In \emph{2008 Sixth Indian Conference on Computer Vision, Graphics \& Image Processing}, 2008.
\newblock \doi{10.1109/ICVGIP.2008.47}.

\bibitem[Ovadia et~al.(2019)Ovadia, Fertig, Ren, Nado, Sculley, Nowozin, Dillon, Lakshminarayanan, and Snoek]{ovadia2019can}
Yaniv Ovadia, Emily Fertig, Jie Ren, Zachary Nado, David Sculley, Sebastian Nowozin, Joshua Dillon, Balaji Lakshminarayanan, and Jasper Snoek.
\newblock Can you trust your model's uncertainty? evaluating predictive uncertainty under dataset shift.
\newblock In \emph{Advances in Neural Information Processing Systems (NeurIPS)}, 2019.

\bibitem[Parkhi et~al.(2012)Parkhi, Vedaldi, Zisserman, and Jawahar]{parkhi12a}
Omkar~M. Parkhi, Andrea Vedaldi, Andrew Zisserman, and C.~V. Jawahar.
\newblock Cats and dogs.
\newblock In \emph{Proceedings of the IEEE Conference on Computer Vision and Pattern Recognition (CVPR)}, 2012.

\bibitem[Polyak(1964)]{polyak1964some}
Boris~T Polyak.
\newblock Some methods of speeding up the convergence of iteration methods.
\newblock \emph{Ussr computational mathematics and mathematical physics}, 4\penalty0 (5):\penalty0 1--17, 1964.

\bibitem[Pr{\ae}stgaard \& Wellner(1993)Pr{\ae}stgaard and Wellner]{praestgaard1993exchangeably}
Jens Pr{\ae}stgaard and Jon~A Wellner.
\newblock Exchangeably weighted bootstraps of the general empirical process.
\newblock \emph{The Annals of Probability}, pp.\  2053--2086, 1993.

\bibitem[Ramachandran et~al.(2017)Ramachandran, Zoph, and Le]{ramachandran2017searching}
Prajit Ramachandran, Barret Zoph, and Quoc~V Le.
\newblock Searching for activation functions.
\newblock \emph{arXiv preprint arXiv:1710.05941}, 2017.

\bibitem[Ranganath et~al.(2014)Ranganath, Gerrish, and Blei]{ranganath2014black}
Rajesh Ranganath, Sean Gerrish, and David Blei.
\newblock Black box variational inference.
\newblock In \emph{Artificial intelligence and statistics}, pp.\  814--822. PMLR, 2014.

\bibitem[Robbins(1956)]{robbins1956empirical}
H.~Robbins.
\newblock An empirical {Bayes} approach to statistics.
\newblock \emph{Berkeley Symposium on Mathematical Statistics and Probability}, 1956.

\bibitem[Robbins \& Monro(1951)Robbins and Monro]{robbins1951stochastic}
Herbert Robbins and Sutton Monro.
\newblock A stochastic approximation method.
\newblock \emph{The annals of mathematical statistics}, pp.\  400--407, 1951.

\bibitem[Russakovsky et~al.(2015)Russakovsky, Deng, Su, Krause, Satheesh, Ma, Huang, Karpathy, Khosla, Bernstein, et~al.]{russakovsky2015imagenet}
Olga Russakovsky, Jia Deng, Hao Su, Jonathan Krause, Sanjeev Satheesh, Sean Ma, Zhiheng Huang, Andrej Karpathy, Aditya Khosla, Michael Bernstein, et~al.
\newblock Imagenet large scale visual recognition challenge.
\newblock \emph{International journal of computer vision}, 115\penalty0 (3):\penalty0 211--252, 2015.

\bibitem[Shin et~al.(2021)Shin, Cho, Min, and Lim]{shin2021neural}
Minsuk Shin, Hyungjoo Cho, Hyun-seok Min, and Sungbin Lim.
\newblock Neural bootstrapper.
\newblock In \emph{Advances in Neural Information Processing Systems (NeurIPS)}, 2021.

\bibitem[Shwartz-Ziv et~al.(2022)Shwartz-Ziv, Goldblum, Souri, Kapoor, Zhu, LeCun, and Wilson]{shwartz2022pre}
Ravid Shwartz-Ziv, Micah Goldblum, Hossein Souri, Sanyam Kapoor, Chen Zhu, Yann LeCun, and Andrew~Gordon Wilson.
\newblock Pre-train your loss: Easy bayesian transfer learning with informative priors.
\newblock In \emph{Advances in Neural Information Processing Systems (NeurIPS)}, 2022.

\bibitem[Singh \& Krishnan(2020)Singh and Krishnan]{singh2020filter}
Saurabh Singh and Shankar Krishnan.
\newblock Filter response normalization layer: Eliminating batch dependence in the training of deep neural networks.
\newblock In \emph{Proceedings of the IEEE/CVF Conference on Computer Vision and Pattern Recognition (CVPR)}, 2020.

\bibitem[Sun et~al.(2017)Sun, Shrivastava, Singh, and Gupta]{sun2017revisiting}
Chen Sun, Abhinav Shrivastava, Saurabh Singh, and Abhinav Gupta.
\newblock Revisiting unreasonable effectiveness of data in deep learning era.
\newblock In \emph{Proceedings of the IEEE international conference on computer vision}, pp.\  843--852, 2017.

\bibitem[Tomczak \& Welling(2018)Tomczak and Welling]{tomczak2018vae}
J.~Tomczak and M.~Welling.
\newblock {VAE} with a {VampPrior}.
\newblock In \emph{Proceedings of The 21st International Conference on Artificial Intelligence and Statistics (AISTATS 2018)}, 2018.

\bibitem[Torralba et~al.(2008)Torralba, Fergus, and Freeman]{4531741}
Antonio Torralba, Rob Fergus, and William~T. Freeman.
\newblock 80 million tiny images: A large data set for nonparametric object and scene recognition.
\newblock \emph{IEEE Transactions on Pattern Analysis and Machine Intelligence}, 30\penalty0 (11):\penalty0 1958--1970, 2008.
\newblock \doi{10.1109/TPAMI.2008.128}.

\bibitem[Wah et~al.(2011)Wah, Branson, Welinder, Perona, and Belongie]{wah_branson_welinder_perona_belongie_2011}
Catherine Wah, Steve Branson, Peter Welinder, Pietro Perona, and Serge Belongie.
\newblock The caltech-ucsd birds-200-2011 dataset, Jul 2011.

\bibitem[Walker(2013)]{walker2013bayesian}
Stephen~G Walker.
\newblock Bayesian inference with misspecified models.
\newblock \emph{Journal of statistical planning and inference}, 143\penalty0 (10):\penalty0 1621--1633, 2013.

\bibitem[Wan et~al.(2019)Wan, Xiong, Li, Zhu, and Huan]{wan2019towards}
Ruosi Wan, Haoyi Xiong, Xingjian Li, Zhanxing Zhu, and Jun Huan.
\newblock Towards making deep transfer learning never hurt.
\newblock In \emph{2019 IEEE International Conference on Data Mining (ICDM)}, pp.\  578--587. IEEE, 2019.

\bibitem[Wang et~al.(2019)Wang, Singh, Michael, Hill, Levy, and Bowman]{wang2019glue}
Alex Wang, Amanpreet Singh, Julian Michael, Felix Hill, Omer Levy, and Samuel~R. Bowman.
\newblock {GLUE}: A multi-task benchmark and analysis platform for natural language understanding.
\newblock In \emph{International Conference on Learning Representations}, 2019.
\newblock URL \url{https://openreview.net/forum?id=rJ4km2R5t7}.

\bibitem[Warstadt et~al.(2018)Warstadt, Singh, and Bowman]{warstadt2018neural}
Alex Warstadt, Amanpreet Singh, and Samuel~R. Bowman.
\newblock Neural network acceptability judgments.
\newblock \emph{arXiv preprint 1805.12471}, 2018.

\bibitem[Welling \& Teh(2011)Welling and Teh]{welling2011bayesian}
Max Welling and Yee~W Teh.
\newblock Bayesian learning via stochastic gradient langevin dynamics.
\newblock In \emph{Proceedings of The 28th International Conference on Machine Learning (ICML)}, 2011.

\bibitem[Wenzel et~al.(2020)Wenzel, Roth, Veeling, \'Swi\c{a}tkowski, Tran, Mandt, Snoek, Salimans, Jenatton, and Nowozin]{wenzel2020how}
F.~Wenzel, K.~Roth, B.~S. Veeling, J.~\'Swi\c{a}tkowski, L.~Tran, S.~Mandt, J.~Snoek, T.~Salimans, R.~Jenatton, and S.~Nowozin.
\newblock How good is the {Bayes} posterior in deep neural networks really?
\newblock In \emph{Proceedings of The 37th International Conference on Machine Learning (ICML)}, 2020.

\bibitem[Wilson \& Izmailov(2020)Wilson and Izmailov]{wilson2020bayesian}
Andrew~G Wilson and Pavel Izmailov.
\newblock Bayesian deep learning and a probabilistic perspective of generalization.
\newblock In \emph{Advances in Neural Information Processing Systems (NeurIPS)}, 2020.

\bibitem[Wolf et~al.(2020)Wolf, Debut, Sanh, Chaumond, Delangue, Moi, Cistac, Rault, Louf, Funtowicz, Davison, Shleifer, von Platen, Ma, Jernite, Plu, Xu, Scao, Gugger, Drame, Lhoest, and Rush]{wolf-etal-2020-transformers}
Thomas Wolf, Lysandre Debut, Victor Sanh, Julien Chaumond, Clement Delangue, Anthony Moi, Pierric Cistac, Tim Rault, Rémi Louf, Morgan Funtowicz, Joe Davison, Sam Shleifer, Patrick von Platen, Clara Ma, Yacine Jernite, Julien Plu, Canwen Xu, Teven~Le Scao, Sylvain Gugger, Mariama Drame, Quentin Lhoest, and Alexander~M. Rush.
\newblock Transformers: State-of-the-art natural language processing.
\newblock In \emph{Proceedings of the 2020 Conference on Empirical Methods in Natural Language Processing: System Demonstrations}, pp.\  38--45, Online, October 2020. Association for Computational Linguistics.
\newblock URL \url{https://www.aclweb.org/anthology/2020.emnlp-demos.6}.

\bibitem[Wortsman et~al.(2022)Wortsman, Ilharco, Gadre, Roelofs, Gontijo-Lopes, Morcos, Namkoong, Farhadi, Carmon, Kornblith, et~al.]{wortsman2022model}
Mitchell Wortsman, Gabriel Ilharco, Samir~Ya Gadre, Rebecca Roelofs, Raphael Gontijo-Lopes, Ari~S Morcos, Hongseok Namkoong, Ali Farhadi, Yair Carmon, Simon Kornblith, et~al.
\newblock Model soups: averaging weights of multiple fine-tuned models improves accuracy without increasing inference time.
\newblock In \emph{Proceedings of The 39th International Conference on Machine Learning (ICML)}, 2022.

\bibitem[Xuhong et~al.(2018)Xuhong, Grandvalet, and Davoine]{xuhong2018explicit}
LI~Xuhong, Yves Grandvalet, and Franck Davoine.
\newblock Explicit inductive bias for transfer learning with convolutional networks.
\newblock In \emph{Proceedings of The 35th International Conference on Machine Learning (ICML)}, 2018.

\bibitem[Zagoruyko \& Komodakis(2016)Zagoruyko and Komodakis]{zagoruyko2016wide}
Sergey Zagoruyko and Nikos Komodakis.
\newblock Wide residual networks.
\newblock In \emph{Procedings of the British Machine Vision Conference 2016}. British Machine Vision Association, 2016.

\bibitem[Zhang et~al.(2020)Zhang, Karimireddy, Veit, Kim, Reddi, Kumar, and Sra]{zhang2020adaptive}
Jingzhao Zhang, Sai~Praneeth Karimireddy, Andreas Veit, Seungyeon Kim, Sashank Reddi, Sanjiv Kumar, and Suvrit Sra.
\newblock Why are adaptive methods good for attention models?
\newblock In \emph{Advances in Neural Information Processing Systems (NeurIPS)}, 2020.

\end{thebibliography}
\bibliographystyle{iclr2024_conference}

\appendix

\clearpage
\newpage
\section{Discussion on block Dirichlet distribution}
\label{app:sec:blockDir}

\subsection{Implementation}
Let $I_1,\ldots,I_L$ denote the randomly constructed pairwise disjoint index sets where $\bigcup_{i=1}^L I_i = [n]$ where $|I_i|=\frac{n}{L}$ for all $i\in [L]$ and $n$ is the number of the training dataset.  Furthermore, let $I_1',\ldots,I_L'$ denote another randomly constructed pairwise disjoint index sets where $\bigcup_{i=1}^L I_i' = [n]$ where $|I_i'|=\frac{n}{L}$ for all $i\in [L]$. Now let index mapping functions $u:[n]\rightarrow [L]\times [L]$ where $u(i) = (l_1, l_2)$ if $i\in I_{l_1}$ and $i\in I_{l_2}$. Here we denote $u(i)[1] = l_1$ and $u(i)[2] = l_2$. Then, given some scaled Dirichlet weights $(v_1,...,v_n, \tilde{v}_1,...,\tilde{v}_L)\sim 2L\times \text{Dir}(1,...,1,\alpha/n,...,\alpha/n)$ on $\bbR_+^{2L}$, we assign the weights for the $i$th training datum and $i$-th pseudo datum as $w_i=v_{u(i)[1]}$ and $\widetilde{w}_i=\tilde{v}_{u(i)[2]}$ .

\subsection{Convergence}

In this section, we will show that a simplified version of the blocked posterior bootstrap is asymptotically equivalent to the non-blocked bootstraps based on the assumptions in Theorem A.1 of \citet{shin2021neural}. The simplification involves keeping $T$ fixed and not blocking the pseudo data. We leave the extension of our theory to the implemented case for future work. Formally, let $I_1,\ldots,I_L$ denotes the randomly constructed pairwise disjoint index sets where $\bigcup_{i=1}^L I_i = [n]$ and $n$ is the size of the training dataset, and again $|I_i|=\frac{n}{L}$ for all $i\in [L]$. Now define the index mapping functions $u:[n]\rightarrow [L]$ where $u(i) = l$ if $i\in I_{l}$. Here we denote $u(i) = l$. Then, given some Dirichlet weights $(v_1,...,V_n, \tilde{v}_1,...,\tilde{v}_T) \sim  \text{Dir}(1,...,1,\alpha/T,...,\alpha/T)$ on $\bbR_+^{L+T}$, we assign the weight $w_i=v_{u(i)}$ for the $i$-th training datum and $\widetilde{w}_j=\tilde{v}_j$ for the $j$-th pseudo datum. We have omitted the scaling term for the theoretical study, and assume that $L$ is a fixed constant.

We assume that the 
data points $Z_1,Z_2,\ldots$ are i.i.d. from the probability measure $\bbP_0$, where we write $Z_i = (X_i,Y_i)$.  We then denote the empirical probability measure for the $n$ observations by ${\bbP}_n:= \frac{1}{n}\sum_{i=1}^n \delta_{Z_i}$. For any probability measure $\bbP$ and a $\bbP$-measurable function $g$, let $\bbP g$ denote $\int g\dee \bbP$. 
The non-blocked weighted posterior distribution can be written as 
\begin{align*}
    F = \sum_{i=1}^n \zeta_i \delta_{Z_i} + \sum_{j=1}^T\widetilde{\zeta }_j \delta_{\widetilde{Z}_j}
\end{align*}
where $\widetilde{Z}_j \iidsim F_\pi$ and $\{\zeta_1,\ldots,\zeta_n,\widetilde{\zeta}_1,\ldots, \widetilde{\zeta}_T\} \sim \text{Dir}\left(1,\ldots,1,\alpha/T, \ldots, \alpha/T\right)$ independently. Here, we assume $\alpha$ and $T$ are constant and $F_\pi$ is a fixed distribution from which we draw $T$ samples. We will also write $\hat{F}_\pi = \frac{1}{T}\sum_{j= 1}^T \delta_{\widetilde{Z}_j}$. The blocked weighted posterior distribution is then
\begin{align*}
    F_B = \frac{1}{\sum_{i=1}^n w_i + \sum_{j=1}^T \tilde{w}_j}\left[\sum_{i=1}^n w_i \delta_{Z_i} + \sum_{j=1}^T\widetilde{w}_j \delta_{\widetilde{Z}_j}\right]
\end{align*} 
where $\{w_1,\ldots,w_n,\widetilde{w}_1,\ldots,\widetilde{w}_T\}$ is described just above. Note that we have
$$
\sum_{i=1}^n w_i + \sum_{j=1}^T \widetilde{w}_j = 1 + \left(\frac{n}{L}-1\right)\sum_{i=1}^L v_i.
$$
We will now show that $F$ and $F_B$ are asymptotically equivalent.\\

First, we study $F$ following Theorem 12.2 from \cite{ghosal2017fundamentals}.  
\begin{prop}[\cite{ghosal2017fundamentals}]
\label{app:prop:A.1}
Let $\mathcal{H}$ be class of functions that is $\mathbb{P}_0$-Donsker, with envelope function $H(z) = \sup_{h\in \mathcal{H}}|h(z)|$ which satisfies both $\mathbb{P}^*_0[H^2]< \infty$ and $\hat{F}_\pi^*[H^2]< \infty$. The process $\sqrt{n}\left(F - \mathbb{P}_n\right)$, where $F$ is the non-blocked weighted posterior, converges conditionally in distribution given $Z_1,Z_2,\ldots$ in 
$L_\infty\left(\mathcal{F}\right)$ to $\mathbb{G}$ as $n\to \infty$ $\mathbb{P}_0^\infty$-a.s., where $\mathbb{G}$ is a Brownian bridge process.
\end{prop}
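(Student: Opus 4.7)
The plan is to decompose $\sqrt{n}(F-\bbP_n)$ into a training-data piece plus a pseudo-data piece, show the latter vanishes uniformly over $\calH$, and reduce the former to the classical Bayesian bootstrap, whose $L_\infty(\calH)$ limit is the Brownian bridge $\bbG$. Write
\[
\sqrt{n}(F - \bbP_n) = \sqrt{n}\left[\sum_{i=1}^n \zeta_i \delta_{Z_i} - \bbP_n\right] + \sqrt{n}\sum_{j=1}^T \widetilde{\zeta}_j \delta_{\widetilde{Z}_j}.
\]
By Dirichlet marginalization, the total mass $S' := \sum_j \widetilde\zeta_j$ of the pseudo-data piece is $\mathrm{Beta}(\alpha,n)$-distributed, so $S' = O_p(1/n)$ and each individual $\widetilde\zeta_j = O_p(1/n)$. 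Since $T$ is held fixed and $|h| \le H$ for all $h \in \calH$,
\[
\sup_{h\in\calH}\left|\sqrt{n}\sum_{j=1}^T \widetilde\zeta_j h(\widetilde Z_j)\right| \;\le\; \sqrt{n}\sum_{j=1}^T \widetilde\zeta_j\,H(\widetilde Z_j) \;=\; O_p(1/\sqrt{n}),
\]
where finiteness of each $H(\widetilde Z_j)$ is ensured by the envelope assumption $\hat F_\pi^*[H^2] < \infty$. Hence this piece is $o_p(1)$ in $L_\infty(\calH)$, $\bbP_0^\infty$-a.s.

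For the training-data piece, set $S := \sum_{i=1}^n \zeta_i$ and $\tilde\zeta_i := \zeta_i/S$. Dirichlet aggregation gives $S \sim \mathrm{Beta}(n,\alpha)$ independently of the normalized weights $(\tilde\zeta_1,\ldots,\tilde\zeta_n) \sim \mathrm{Dir}(1,\ldots,1)$, which are exactly Bayesian bootstrap weights. Decompose
\[
\sqrt{n}\left[\sum_{i=1}^n \zeta_i \delta_{Z_i} - \bbP_n\right] \;=\; S\cdot\sqrt{n}\left[\sum_{i=1}^n \tilde\zeta_i \delta_{Z_i} - \bbP_n\right] + \sqrt{n}\,(S-1)\,\bbP_n.
\]
Theorem 12.2 of \cite{ghosal2017fundamentals}, applied to the Bayesian bootstrap weights $\tilde\zeta_i$, yields conditional convergence of the bracketed process to $\bbG$ in $L_\infty(\calH)$, $\bbP_0^\infty$-a.s. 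Since $S \to 1$ a.s., the multiplicative factor is asymptotically harmless; meanwhile $\sqrt{n}(S-1) = O_p(1/\sqrt{n})$ (because $\mathrm{Var}(S) = O(1/n^2)$) and $\sup_{h \in \calH}|\bbP_n h| \le \bbP_n H = O_p(1)$ by $\bbP_0^*[H^2]<\infty$, so the correction term $\sqrt{n}(S-1)\bbP_n$ is $o_p(1)$ uniformly over $\calH$. A continuous mapping argument then assembles the three pieces to yield the claimed Brownian-bridge limit.

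The main obstacle will be justifying the ``continuous mapping'' step at the level of $L_\infty(\calH)$-valued processes: one must check that both the multiplicative perturbation by $S$ and the pseudo-data bound commute with weak convergence uniformly over the Donsker class, and that the negligibility statements can be made conditional on $Z_1,Z_2,\ldots$ so as to deliver the $\bbP_0^\infty$-a.s.\ conditional convergence asserted in the proposition. These aspects are handled by standard empirical-process arguments (e.g.\ the continuous mapping theorem of van der Vaart--Wellner in $\ell^\infty(\calH)$) once the pseudo-data and normalization perturbations have been controlled separately as above.
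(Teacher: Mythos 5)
Your proposal is correct and follows essentially the same route as the paper: both split $\sqrt{n}(F-\mathbb{P}_n)$ into the Bayesian-bootstrap (Dirichlet-process) part on the data atoms plus a remainder governed by the $\mathrm{Beta}(\alpha,n)$-distributed pseudo-data mass, show the remainder is $o_p(1)$ in $L_\infty$ since $\sqrt{n}\,\mathrm{Beta}(\alpha,n)\to 0$, and invoke the conditional Donsker theorem for the Dirichlet/Bayesian bootstrap (Theorem 12.2 of Ghosal--van der Vaart) together with Slutsky. The only cosmetic difference is that the paper obtains the decomposition $F=V_nQ+(1-V_n)\mathbb{B}_n$ with the requisite independence by citing Proposition G.10 of the same reference, whereas you rederive it directly from Dirichlet aggregation and neutrality.
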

\begin{proof}
First, we note from Proposition G.10 of \cite{ghosal2017fundamentals} that $F$ can be written as 
\begin{align*}
F = V_n Q + (1-V_n)\mathbb{B}_n
\end{align*}
where $V_n \sim \text{Be}(\alpha, n)$, $\mathbb{B}_n \sim \text{DP}(n,\mathbb{P}_n)$ and $Q \sim \text{DP}\left(\alpha, \hat{F}_\pi\right)$ are independent on an appropriate product probability space. Here, $\text{Be}(\alpha,\beta)$ denotes the beta distribution with parameters $\alpha$ and $\beta$. The process can then be written as
\begin{align*}
    \sqrt{n}(F- \mathbb{P}_n) = \sqrt{n}V_n\left(Q - \mathbb{B}_n\right) + \sqrt{n}\left(\mathbb{B}_n - \mathbb{P}_n\right).
\end{align*}
For the first term on the right, $\sqrt{n}V_n \to 0$ in probability, and $\mathbb{B}_n$ converges to a limit in $L_\infty(\mathcal{F})$. As the mapping $h \to Qh$ is in $L_\infty(\mathcal{F})$, the process $\sqrt{n}V_n\left(Q - \mathbb{B}_n\right)$ converges to 0 in probability in $L_\infty(\mathcal{F})$.

The second term $\sqrt{n}\left(\mathbb{B}_n - \mathbb{P}_n\right)$ is well understood and converges conditionally in distribution given $Z_1,Z_2,\ldots$ to $\mathbb{G}$ in $L_\infty(\mathcal{F})$ $\mathbb{P}_0^\infty$-a.s., and applying Slutsky's theorem gives the result.
\end{proof}

This Proposition \ref{app:prop:A.1} shows that $\sqrt{n}(F-\bbP_n)$  converges weakly to $\bbG$ when $n\rightarrow\infty$. Now, we study $F_B$ with a similar process in Proposition \ref{app:prop:A.1}, following \cite{shin2021neural}.

\begin{prop}
\label{app:prop:A.2}
Let $\mathcal{H}$ be class of functions that is $\mathbb{P}_0$-Donsker, with envelope function $H(z) = \sup_{h\in \mathcal{H}}|h(z)|$ which satisfies both $\mathbb{P}^*_0[H^2]< \infty$ and $\hat{F}_\pi^*[H^2]< \infty$. Assume that $L\to \infty$ as $n\to \infty$. Then the process $\sqrt{n}\left(F_B - \mathbb{P}_n\right)$, where $F_B$ is the blocked weighted posterior, converges conditionally in distribution given $Z_1,Z_2,\ldots$  in 
$L_\infty\left(\mathcal{F}\right)$ to $\mathbb{G}$ as $n\to \infty$ $\mathbb{P}_0^\infty$-a.s., where $\mathbb{G}$ is a Brownian bridge process.
\end{prop}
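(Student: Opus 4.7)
My plan is to mirror the strategy of the proof of \cref{app:prop:A.1}: decompose $F_B$ into a convex combination of a blocked weighted resampling of the data and an independent draw from the centering DP, show that the DP part is asymptotically negligible after rescaling by $\sqrt{n}$, and then invoke a blocked bootstrap functional CLT on the data part.

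By aggregation of the Dirichlet, setting $V_L = \sum_l v_l$ and $\widetilde{V}_n = \sum_j \widetilde{v}_j = 1-V_L$ gives $\widetilde{V}_n \sim \text{Be}(\alpha, L)$, independent of the two rescaled simplex vectors $(v_l/V_L)_l \sim \text{Dir}(1,\ldots,1)$ and $(\widetilde{v}_j/\widetilde{V}_n)_j \sim \text{Dir}(\alpha/T,\ldots,\alpha/T)$. Writing $\mathbb{P}_l = (L/n)\sum_{i\in I_l}\delta_{Z_i}$ for the block empirical measure, $\mathbb{B}_{n,L} = \sum_l (v_l/V_L)\mathbb{P}_l$, and $Q = \sum_j (\widetilde{v}_j/\widetilde{V}_n)\delta_{\widetilde{Z}_j} \sim \text{DP}(\alpha,\hat{F}_\pi)$, a direct calculation using $\sum_i w_i = (n/L)V_L$ and $\sum_j \widetilde{w}_j = \widetilde{V}_n$ yields
\begin{align*}
F_B = (1-\lambda_n)\,\mathbb{B}_{n,L} + \lambda_n\,Q,\qquad \lambda_n = \frac{\widetilde{V}_n}{(n/L)(1-\widetilde{V}_n)+\widetilde{V}_n},
\end{align*}
so $\sqrt{n}(F_B-\mathbb{P}_n) = \sqrt{n}(1-\lambda_n)(\mathbb{B}_{n,L}-\mathbb{P}_n) + \sqrt{n}\lambda_n(Q-\mathbb{P}_n)$. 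Since $L\to\infty$, $\widetilde{V}_n \sim \text{Be}(\alpha,L)$ satisfies $L\widetilde{V}_n \Rightarrow \text{Gamma}(\alpha,1)$, hence $L\widetilde{V}_n = O_P(1)$ and $\sqrt{n}\lambda_n = L\widetilde{V}_n/\sqrt{n}\,(1+o_P(1)) \to 0$ in probability. Combined with the Donsker and envelope assumptions, which bound $\|Q-\mathbb{P}_n\|_{\mathcal{H}}$ in probability exactly as in the proof of \cref{app:prop:A.1}, the DP term $\sqrt{n}\lambda_n(Q-\mathbb{P}_n)$ vanishes in $L_\infty(\mathcal{H})$.

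The main task is to show the conditional convergence of $\sqrt{n}(\mathbb{B}_{n,L}-\mathbb{P}_n)$ to the $\mathbb{P}_0$-Brownian bridge $\mathbb{G}$ in $L_\infty(\mathcal{H})$. Using $\mathbb{P}_n = (1/L)\sum_l \mathbb{P}_l$ and $\sum_l((v_l/V_L)-1/L) = 0$, the process can be rewritten as
\begin{align*}
\sqrt{n}(\mathbb{B}_{n,L}-\mathbb{P}_n)h = \frac{1}{\sqrt{n}}\sum_{l=1}^L L\Bigl(\tfrac{v_l}{V_L}-\tfrac{1}{L}\Bigr)\sum_{i\in I_l}\bigl(h(Z_i)-\mathbb{P}_n h\bigr).
\end{align*}
Because the simplex weights are independent of the data, a moment computation using $\Var(v_l/V_L) = (L-1)/(L^2(L+1))$ and $\operatorname{Cov}(v_l/V_L,v_{l'}/V_L) = -1/(L^2(L+1))$ for $l\ne l'$ shows that the conditional covariance converges a.s.\ to $\mathbb{P}_0(h_1 h_2) - (\mathbb{P}_0 h_1)(\mathbb{P}_0 h_2)$ as $n,L\to\infty$, matching the covariance of $\mathbb{G}$. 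I would then apply Theorem~A.1 of \citet{shin2021neural}, whose $\mathbb{P}_0$-Donsker and envelope hypotheses coincide with ours, to upgrade this finite-dimensional convergence to tightness in $L_\infty(\mathcal{H})$, and conclude by Slutsky's theorem.

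The main obstacle is the tightness half of the blocked functional CLT: because every block of $n/L$ observations shares a single random weight, the resampled empirical process is not a sum of $n$ independent multipliers, and the classical symmetrization or multiplier-bootstrap arguments used for the Bayesian bootstrap do not apply directly. The $L\to\infty$ assumption is precisely what supplies enough weight-exchangeability for asymptotic equicontinuity over $\mathcal{H}$ via bracketing, which is the content of Theorem~A.1 of \citet{shin2021neural} that we invoke as a black box.
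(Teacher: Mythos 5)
Your proposal is correct and takes essentially the same route as the paper's proof: your decomposition $F_B = (1-\lambda_n)\,\mathbb{B}_{n,L} + \lambda_n Q$ obtained by Dirichlet aggregation is exactly the paper's $F_B = V_n Q + (1-V_n)\mathbb{B}_n$ obtained via the Gamma construction (with $\lambda_n = V_n$ and the same independence structure), and both arguments then kill the Dirichlet-process term by showing $\sqrt{n}V_n \to 0$ in probability and delegate the blocked bootstrap term to Theorem A.1 of \citet{shin2021neural} (resting on exchangeability of the block weights, i.e.\ \citet{praestgaard1993exchangeably}), concluding with Slutsky. The only differences are cosmetic: you control $\sqrt{n}\lambda_n$ through the Beta/Gamma limit of $L\widetilde{V}_n$ where the paper uses a Markov bound on $E[V_n]$, and your conditional-covariance computation is a sanity check that is subsumed by the cited theorem.
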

\begin{proof}
Note that $F_B$ can be written as 
\begin{align*}
    F_B = V_n Q + (1-V_n)\mathbb{B}_n
\end{align*}
where   $Q \sim \text{DP}\left(\alpha, \hat{F}_\pi\right)$ as before, and
\begin{align*}
\mathbb{B}_n  = \frac{1}{n}\sum_{i=1}^n \eta_{u(i)} \delta_{Z_i}
\end{align*}
where $(\eta_1,\ldots,\eta_L) \sim L \times \text{Dir}(1,\ldots,1)$. Finally, $V_n \sim G/(G+H_n)$ where we have independently $G \sim \text{Ga}(\alpha,1), H_n \sim \text{Ga}\left(L, L/n\right)$. As before, $V_n$, $Q$ and $\mathbb{B}_n$ are all independent on an appropriate product probability space. 

To see this,  we note that $F_B$ can be written as 
\begin{align*}
F_B &= \frac{1}{1 + \left(\frac{n}{L}-1\right)\sum_{i=1}^L \frac{\gamma_i}{\gamma + \tilde{\gamma}}} \times \frac{1}{\gamma + \widetilde{\gamma}}\left[ \sum_{j=1}^T \widetilde{\gamma}_j \, \delta_{\widetilde{Z}_j} + \sum_{i=1}^n \gamma_{u(i)}\, \delta_{Z_i} \right]\\
&=\frac{1}{\widetilde{\gamma}+ \frac{n}{L}\gamma }\left[ \sum_{j=1}^T \widetilde{\gamma}_j \, \delta_{\widetilde{Z}_j} + \sum_{i=1}^n \gamma_{u(i)}\, \delta_{Z_i} \right]
\end{align*}
where $\widetilde{\gamma}_{1:T} \iidsim \text{Ga}(\alpha/T, 1)$, ${\gamma}_{1:L} \iidsim \text{Ga}(1, 1)$
and $\gamma = \sum_{l = 1}^L \gamma_l$ and $\widetilde{\gamma} = \sum_{j=1}^T \widetilde{\gamma}_j$. Here, $\text{Ga}(\alpha,\beta)$ denotes the Gamma distribution with parameters $\alpha$ and $\beta$. This follows from the Gamma construction of the Dirichlet distribution. 
Following the proof of Proposition G.10 of \cite{ghosal2017fundamentals}, define
$$V_n = \frac{\widetilde{\gamma}} {\widetilde{\gamma}+ \frac{n}{L}\gamma }, \quad    Q  = \sum_{j=1}^T \frac{\widetilde{\gamma}_j}{\widetilde{\gamma}}\delta_{\widetilde{Z}_j}, \quad \mathbb{B}_n =  \frac{L}{n}\sum_{i=1}^n \frac{{\gamma}_{u(i)}}{\gamma}\delta_{Z_i}.$$
It is clear that the above has the correct marginal distributions.
From Proposition G.2(i) of \cite{ghosal2017fundamentals}, we note that $Q$ is independent of the normalizing constant $\widetilde{\gamma}$ (and obviously of $\gamma_{1:L}$ and $\gamma$). Likewise, $\mathbb{B}_n$ is independent of $\gamma$ (and of $\widetilde{\gamma}_{1:T},\widetilde{\gamma}$), so $V_n$, $Q$ and $\mathbb{B}_n$ are all independent.
Finally, we have
\begin{align*}
    V_n Q  =   \frac{1}{\widetilde{\gamma}+ \frac{n}{L}\gamma } \sum_{j = 1}^T \widetilde{\gamma}_j\ \delta_{\widetilde{Z}_j}
\end{align*}
and similarly
\begin{align*}
    (1-V_n )\mathbb{B}_n = \frac{1}{\widetilde{\gamma}+ \frac{n}{L}\gamma } \sum_{i = 1}^n \gamma_{u(i)} \delta_{{Z}_i}
\end{align*}
which together gives $F_B = V_n Q + (1- V_n)\mathbb{B}_n$.

The process under study can then be written as
\begin{align*}
    \sqrt{n}(F_B- \mathbb{P}_n) = \sqrt{n}V_n\left(Q - \widetilde{\mathbb{B}}_n\right) + \sqrt{n}\left(\widetilde{\mathbb{B}}_n - \mathbb{P}_n\right).
\end{align*}
The second term $\sqrt{n}\left(\widetilde{\mathbb{B}}_n - \mathbb{P}_n\right)$ is shown to converge conditionally in distribution given $Z_1,Z_2,\ldots$ to $\mathbb{G}$ in $L_\infty(\mathcal{F})$ $\mathbb{P}_0^\infty$-a.s. in Theorem A.1 of \cite{shin2021neural} under our assumptions.  This relies on the fact that $\{\eta_{u(1)},\ldots,\eta_{u(n)}\}$ are exchangeable due to the random allocation of data points into groups and the result of \cite{praestgaard1993exchangeably}.

For the first term on the right, we have that $\sqrt{n}V_n\to 0$ in probability. To see this, note that
\begin{align*}
    E[V_n] &=E\left[\frac{G}{G+H_n}\right] \leq E\left[\frac{G}{H_n}\right] \\
    &= \frac{E\left[G\right]}{E\left[H_n\right]} = \frac{\alpha}{n}\cdot
\end{align*}
As a result, $\sqrt{n}E[V_n] \to 0$, and from Markov's inequality we have $\sqrt{n}V_n \overset{p}{\to}0$. As before, this gives $\sqrt{n}V_n\left(\widetilde{Q}-\mathbb{B}_n\right) \to 0$ in probability in $L_\infty\left(\mathcal{F}\right)$, which gives the desired result.
\end{proof}
 As $\sqrt{n}(F - \mathbb{P}_n)$ and $\sqrt{n}(F_B - \mathbb{P}_n)$ converge to the same limit, we can also apply the rest of Theorem A.1 of \cite{shin2021neural} under appropriate assumptions.
This shows that the target distribution of a simplified version of the blocked posterior bootstrap is asymptotically equivalent to the target distribution of non-blocked bootstrap.




\section{Supplementary Material on Experiments}
\label{app:sec:exp}

Our code is constructed using the following libraries, which are available under the Apache-2.0 licence\footnote{\href{https://www.apache.org/licenses/LICENSE-2.0}{https://www.apache.org/licenses/LICENSE-2.0}}:
JAX~\citep{jax2018github}, Flax~\citep{deepmind2020jax}, Optax~\citep{deepmind2020jax}, TensorFlow Datasets~\citep{tensorflow2015-whitepaper}, and Transformers~\citep{wolf-etal-2020-transformers}. We intend to make the code available to the public once the research has been published. All experiments were conducted on NVIDIA RTX 3090 GPU machines.

\subsection{Experimental details}

\textbf{Datasets for ResNet-20x4 experiments.}
The ResNet-20x4 experiments make use of the following image classification datasets. The network processes input images with dimensions of $32\times32\times3$, and all the images are standardized through the subtraction of $(0.481, 0.458, 0.408)$ and division by $(0.269, 0.261, 0.276)$.
\begin{itemize}
    \item \texttt{C10} and \texttt{C100} : CIFAR-10/100\footnote{\href{https://www.cs.toronto.edu/~kriz/cifar.html}{https://www.cs.toronto.edu/\~{}kriz/cifar.html}}~\citep{krizhevsky2009learning} consists of 10/100 classes sourced from 80 Million Tiny Images~\citep{4531741}, and every image in this dataset has sizes of $32\times32$. We allocated the 60,000 images publicly available into splits of 45,000 for training, 5,000 for validation, and 10,000 for testing.
    \item \texttt{F101} : Food-101\footnote{\href{https://data.vision.ee.ethz.ch/cvl/datasets_extra/food-101/}{https://data.vision.ee.ethz.ch/cvl/datasets\_extra/food-101/}}~\citep{bossard14} originally consists of 101 food categories, and the images in this dataset have a maximum side length of 512 pixels. We center-cropped the images, resized them to $32\times32$, and divided the 101,000 publicly available images into 61,440 for training, 14,310 for validation, and 25,250 for testing.
    \item \texttt{D120} : Stanford Dogs\footnote{\href{http://vision.stanford.edu/aditya86/ImageNetDogs/main.html}{http://vision.stanford.edu/aditya86/ImageNetDogs/main.html}}~\citep{KhoslaYaoJayadevaprakashFeiFei_FGVC2011} comprises pictures of 120 dogs breeds sourced from ImageNet~\citep{russakovsky2015imagenet}. We conducted center-cropping on these images, resized them to $32\times32$, and partitioned the publicly available set of 20,580 images into 10,240 for training, 1,760 for validation, and 8,580 for testing.
\end{itemize}

\textbf{Datasets for ResNet-50 and ViT-B/16 experiments.}
The following image classification datasets are employed in the ResNet-50 and ViT-B/16 experiments. Both neural network architectures handle input images with dimensions of $224\times224\times3$, and all these images are standardized by subtracting $(0.481, 0.458, 0.408)$ and dividing by $(0.269, 0.261, 0.276)$.
\begin{itemize}
    \item \texttt{B200} : Caltech-UCSD Birds 200\footnote{\href{http://www.vision.caltech.edu/datasets/cub_200_2011/}{http://www.vision.caltech.edu/datasets/cub\_200\_2011/}}~\citep{wah_branson_welinder_perona_belongie_2011} comprises pictures of 200 bird species, where some images overlap with images in ImageNet~\citep{russakovsky2015imagenet}. We allocated the 11,788 images publicly available into split of 5,394 for training, 300 for validation, and 5,994 for testing.
    \item \texttt{C101} : Caltech-101\footnote{\href{https://data.caltech.edu/records/mzrjq-6wc02}{https://data.caltech.edu/records/mzrjq-6wc02}}~\citep{li_andreeto_ranzato_perona_2022} comprises 101 categories, where the images generally have edge lengths ranging from 200 to 300 pixels. We divided the 9,144 publicly available images into subsets of 2,754 for training, 306 for validation, and 6,084 for testing.
    \item \texttt{D47} : Describable Textures Dataset\footnote{\href{https://www.robots.ox.ac.uk/~vgg/data/dtd/index.html}{https://www.robots.ox.ac.uk/\~{}vgg/data/dtd/index.html}}~\citep{cimpoi14describing} consists of 47 categories, and the image sizes range between $300\times300$ and $640\times640$. We utilized a total of 5,640 images, with 1,880 for training, 1,880 for validation, and 1,880 for testing.
    \item \texttt{F102} : Oxford Flowers 102\footnote{\href{https://www.robots.ox.ac.uk/~vgg/data/flowers/102/}{https://www.robots.ox.ac.uk/\~{}vgg/data/flowers/102/}}~\citep{4756141} comprises 102 distinct flower categories commonly found in the United Kingdom. We used a dataset consisting of a total of 8,189 images, splitted as 1,020 for training, 1,020 for validation, and 6,149 for testing.
    \item \texttt{P37} : Oxford-IIIT Pet\footnote{\href{https://www.robots.ox.ac.uk/~vgg/data/pets/}{https://www.robots.ox.ac.uk/\~{}vgg/data/pets/}}~\citep{parkhi12a} comprises 37 categories, where all images have an associated ground truth annotation of breed. We divided the 7,349 images publicly available into split of 3,312 for training, 368 for validation, and 3,669 for testing.
\end{itemize}

\textbf{Datasets for RoBERTa-Base.}
The RoBERTa-Base experiments utilize the following subtasks from the GLUE benchmark. The network processes token sequences with a maximum length of 512 tokens, as per the default setup described in \citet{liu2019roberta}. In the main text, we reported the performance on the validation split.
\begin{itemize}
    \item \texttt{cola} : Corpus of Linguistic Acceptability~\citep{warstadt2018neural} comprises 8.55k sentences designated for training and 1.04k sentences designated for validation.
    \item \texttt{mrpc} : Microsoft Research Paraphrase Corpus~\citep{dolan2005automatically} includes 3.67k pairs of sentences for training and 408 pairs of sentences for validation.
    \item \texttt{rte} : Recognizing Textual Entailment (RTE) originated from a sequence of challenges for textual entailment, including RTE1~\citep{dagan2006pascal}, RTE2~\citep{bar2006second}, RTE3~\citep{giampiccolo2007third}, and RTE5~\citep{bentivogli2009fifth}. Consequently, it comprises 2.49k pairs of sentences for training and 277 pairs of sentences for validation.
\end{itemize}

\textbf{Architectural details on ResNet-20x4.}
In line with \citet{izmailov2021bayesian}, we endeavored in our ResNet-20x4 experiments to establish a clear Bayesian interpretation through the following experimental configuration; 1) We opted not to utilize any data augmentation techniques. 2) We employed Swish activation~\citep{hendrycks2016gaussian,elfwing2018sigmoid,ramachandran2017searching} to attain smoother parameter posteriors. 3) We replaced batch normalization layers~\citep{ioffe2015batch} with filter response normalization layers~\citep{singh2020filter}. Although these setups might result in reduced performance compared to conventional experimental configurations (such as training the network with ReLU activations, batch normalization layers, and data augmentation strategies), we believe that these setups are likely to produce results that are clearer from a Bayesian perspective. It is worth noting that our ResNet-50 experiments address the conventional configurations with ReLU activation and batch normalization layers.

\textbf{Optimization details on ResNet-20x4 and ResNet-50.}
We utilized an SGD optimizer with momentum~\citep{polyak1964some}. The momentum value was kept constant at 0.9, and we experimented with different base learning rates using the cosine decay schedule, specifically testing values in the range of \{0.1, 0.03, 0.01\}. For the ResNet-20x4 experiments, training terminated after 10 epochs with a batch size of 80. For the ResNet-50 experiments, optimization halted after 5,000 training steps with a batch size of 128. These experiments were conducted on a single GPU machine. During the SGHMC experiments, we implemented posterior tempering as a strategy to mitigate the impact of the \textit{cold posterior effect}~\citep{wenzel2020how}, employing a posterior temperature setting of 0.0001.

\textbf{Optimization details on ViT-B/16 and RoBERTa-Base.}
For the ViT-B/16 and RoBERTa-Base models, we employed an Adam optimizer~\citep{kingma2015adam}. The momentum hyperparameters were set to their default value from the Optax library, which are 0.9 for the first moment and 0.999 for the second moment. We configured the base learning rates at 0.0001 for ViT-B/16 and 0.00003 for RoBERTa-Base. In the case of ViT-B/16 experiments, we applied a cosine decay schedule, while for RoBERTa-Base experiments, a linear decay schedule with warmup was used. The optimization terminated after 5,000 training steps with a batch size of 128 for ViT-B/16 experiments conducted on distributed training with two GPU machines. For RoBERTa-Base experiments, the optimization terminated after 10 training epochs with a batch size of 16 on a single GPU machine.

\subsection{Methods and hyperparameters}
\label{app:sec:exp:hparams}

\begin{table}[t]
    \centering
    \caption{\textbf{Hyperparameter setup.} It summarizes the hyperparameter settings employed in the experimental results presented in the main text.}
    \label{table/hparams}
    
    \begin{subtable}[t]{\textwidth}
    \centering
    \caption{ResNet-20x4}
    \begin{tabular}{lllll}
    \toprule
    & \texttt{B200} & \texttt{C101} & \texttt{D47} & \texttt{F102} \\
    \midrule
    L2SP & $\beta=10^{-7}$ &$\beta=10^{-7}$&$\beta=10^{-7}$&$\beta=10^{-10}$\\
    PTYL & $\gamma=10^{16}$&$\gamma=10^{16}$&$\gamma=10^{16}$&$\gamma=10^{2}$\\
    NPTL & $\alpha=1.0$ & $\alpha=1.0$ &$\alpha=1.0$&$\alpha=1.0$\\
    \bottomrule
    \end{tabular}
    \end{subtable}
    \vspace{0.2em}
    
    \begin{subtable}[t]{\textwidth}
    \centering
    \caption{ResNet-50 and ViT-B/16}
    \begin{tabular}{lllllll}
    \toprule
    & & \texttt{B200} & \texttt{C101} & \texttt{D47} & \texttt{F102} & \texttt{P37} \\
    \midrule
    \multirow{3}{*}{ResNet-50}
    & L2SP & $\beta=10^{-3}$ & $\beta=10^{-3}$ & $\beta=10^{-4}$ & $\beta=10^{-4}$ & $\beta=10^{-4}$ \\
    & PTYL & $\gamma=10^{12}$ & $\gamma=10^{12}$ & $\gamma=10^{12}$ & $\gamma=10^{12}$ & $\gamma=10^{12}$ \\
    & NPTL & $\alpha=10^{3}$ & $\alpha=10^{0}$ & $\alpha=10^{2}$ & $\alpha=10^2$ & $\alpha=10^{2}$ \\
    \midrule
    \multirow{2}{*}{ViT-B/16}
    & L2SP & $\beta=10^{-4}$ & $\beta=10^{-4}$ & $\beta=10^{-4}$ & $\beta=10^{-4}$ & $\beta=10^{-4}$ \\
    & NPTL & $\alpha=1.0$ & $\alpha=1.0$ & $\alpha=1.0$ & $\alpha=1.0$ & $\alpha=1.0$ \\
    \bottomrule
    \end{tabular}
    \end{subtable}
    \vspace{0.2em}

    \begin{subtable}[t]{\textwidth}
    \centering
    \caption{RoBERTa-Base}
    \begin{tabular}{llll}
    \toprule
    & \texttt{cola} & \texttt{mrpc} & \texttt{rte} \\
    \midrule
    L2SP & $\beta=10^{-2}$ & $\beta=10^{-2}$ & $\beta=10^{-2}$ \\
    NPTL & $\alpha=10^{-2}$ & $\alpha=10^{-2}$ & $\alpha=10^{-2}$\\
    \bottomrule
    \end{tabular}
    \end{subtable}
\end{table}

\cref{table/hparams} summarizes a comprehensive outline of the hyperparameter settings for each experiment.

\textbf{A strength of prior belief in NPTL.}
NPTL includes a hyperparameter $\alpha$ that signifies the strength of the prior belief. We summarize a detailed hyperparameter configuration for each experiment:
\begin{itemize}
    \item In ResNet-20x4 and ResNet-50 experiments, we swept over the logarithmically spaces values of $\alpha\in\{10^{k}\}_{k}$.
    \item In ViT-B/16 experiments, we set a constant value of $\alpha=1.0$ without any sweeping.
    \item In RoBERTa-Base experiments, we set $\alpha$ to a constant value of $0.01$ after exploring different values of $\alpha\in\{1.0, 0.1, 0.01\}$.
\end{itemize}

\textbf{A prior variance in L2SP and PTYL.}
Consider a Gaussian prior over neural network parameters $\bphi$ denoted as $p(\bphi) = \calN(\bphi ; \bmu, \bSigma)$, where $\bmu$ represents the mean and $\bSigma$ signifies the covariance. (1) The L2SP prior corresponds to a scenario where $\bmu=\bphi^\ast$ and $\bSigma=\beta\bI$. Here, $\bphi^\ast$ denotes the pre-trained parameters, and $\beta$ serves as a hyperparameter responsible for adjusting the prior variance. (2) The PTYL prior corresponds to a scenario where $\bmu=\bphi^\ast_{\text{SWA}}$ and $\bSigma=\gamma\bSigma^\ast_{\text{SWAG}}$. Here, $\bphi^\ast_{\text{SWA}}$ and $\bSigma^\ast_{\text{SWAG}}$ are determined through the SWAG procedure using upstream data, while $\gamma$ acts as a hyperparameter controlling the scale of prior variance. Following the official implementation code of \citet{shwartz2022pre}\footnote{\href{https://github.com/hsouri/BayesianTransferLearning}{https://github.com/hsouri/BayesianTransferLearning}}, we introduced an additional hyperparameter $\varepsilon=0.1$ to avoid any numerical issues by defining the covariance as $\bSigma = \gamma\bSigma_{\text{SWAG}}^\ast + \varepsilon\bI$. We summarize a detailed hyperparameter configuration for each experiment:
\begin{itemize}
    \item In ResNet-20x4 experiments, we swept over the logarithmically spaced values of $1/\beta \in \{ 2.5\times2^{k} \}_{k}$ and $\gamma \in \{ 2^{k} \}_{k}$.
    \item In ResNet-50 experiments, we instead tuned $1/(\beta|\calD|)$ to align with the established practical weight decay regularization convention, where $|\calD|$ denotes the number of training examples. We swept over the logarithmically spaced values of $1/(\beta|D|) \in \{10^{-k}\}_{k}$ and $\gamma \in \{ 10^{k} \}_{k}$.
    \item We fixed $1/(\beta|\calD|) = 0.0001$ in ViT-B/16 experiments and $1/(\beta|\calD|) = 0.01$ in RoBERTa-Base experiments.
\end{itemize}

\subsection{Supplementary results for CIFAR-10-C benchmark}
\label{app:sec:exp:corruption}

\textbf{Measuring calibration error.}
\cref{table/R20_corruption_full} further provides the evaluation results using ECE. Again, NPTL surpasses other baselines, indicating it provides well-calibrated predictions.

\textbf{Results in box plots.}
\cref{figure/R20_corruption} visually presents supplementary statistics regarding the results, including the median and the first and third quartiles, depicted through a box plot. Overall, NPTL achieves lower NLL and ECE.

\begin{table}[t]
\centering
\caption{\textbf{Calibration results for CIFAR-10-C.} Expected calibration error (ECE) of NPTL and baseline methods for BMA under five different levels of common corruptions. Results with standard deviations represent the average across 15 different corruption types. A \BL{boldfaced underline} highlights the best value, while an \UL{underline} denotes the top two values in each column. The last `Avg.' column summarizes the overall results for each method across all intensity levels. It supplements \cref{table/R20_corruption} presented in the main text.}
\label{table/R20_corruption_full}
\resizebox{\textwidth}{!}{\begin{tabular}{clcccccc}
\toprule
& & \multicolumn{5}{c}{Intensity levels} \\
\cmidrule(lr){3-7}
Metrics & Methods & 1 & 2 & 3 & 4 & 5 & Avg.\\
\midrule
\multirow{5}{*}{ECE ($\downarrow$)}
    & SGHMC + L2SP    & 0.039\PM{0.020} & 0.050\PM{0.043} & \UL{0.060}\PM{0.071} & \UL{0.092}\PM{0.093} & \UL{0.138}\PM{0.134} & \UL{0.076}\\
    & SGHMC + PTYL    & \UL{0.036}\PM{0.027} & \UL{0.048}\PM{0.048} & 0.063\PM{0.081} & 0.098\PM{0.110} & 0.149\PM{0.144} & 0.079\\
    & Ensemble + L2SP & 0.046\PM{0.057} & 0.071\PM{0.078} & 0.096\PM{0.102} & 0.135\PM{0.123} & 0.191\PM{0.142} & 0.108\\
    & Ensemble + PTYL & 0.051\PM{0.067} & 0.078\PM{0.089} & 0.104\PM{0.111} & 0.145\PM{0.133} & 0.200\PM{0.145} & 0.116\\
    & NPTL (ours)      & \BL{0.025}\PM{0.036} & \BL{0.040}\PM{0.053} & \BL{0.058}\PM{0.076} & \BL{0.084}\PM{0.094} & \BL{0.123}\PM{0.108} & \BL{0.066}\\
\bottomrule
\end{tabular}}
\end{table}

\begin{figure}
    \centering
    \includegraphics[width=\textwidth]{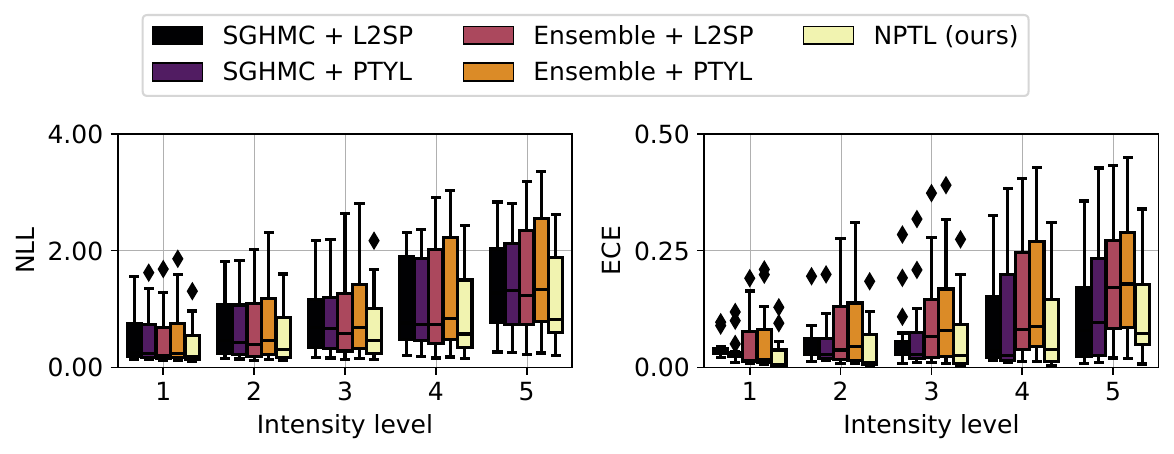}
    \caption{\textbf{Box plots for CIFAR-10-C.} It supplements \cref{table/R20_corruption} presented in the main text.}
    \label{figure/R20_corruption}
\end{figure}

\subsection{Supplementary results for ResNet-20x4}
\begin{table}[t]
    \centering
    \caption{\textbf{Additional BMA results for ResNet-20x4.} Evaluation results of NPTL and baseline method for BMA on four image classification datasets, including \texttt{C10}, \texttt{C100}, \texttt{F101}, and \texttt{D120}. Results with standard deviations represent the average of three runs. Here, IG implies the zero-mean isotropic Gaussian prior.
    }
    \label{table/R20_additional}
    \setlength{\tabcolsep}{9pt}
    \resizebox{0.9\textwidth}{!}{\begin{tabular}{lllllll}
    \toprule
    & & \multicolumn{4}{c}{Datasets} \\
    \cmidrule(lr){3-6}
    Metric & Methods & \texttt{C10} & \texttt{C100} & \texttt{F101} & \texttt{D120} & Avg. \\
    \midrule
    \multirow{2}{*}{ACC ($\uparrow$)}
    & SGHMC + IG      & 0.872\PM{0.000} & 0.625\PM{0.001} & 0.550\PM{0.001} & 0.235\PM{0.001} & 0.571 \\
    & NPTL (ours)        & 0.964\PM{0.001} & 0.818\PM{0.000} & 0.644\PM{0.002} & 0.634\PM{0.001} & 0.765 \\
    \midrule
    \multirow{2}{*}{NLL ($\downarrow$)}
    & SGHMC + IG      & 0.381\PM{0.000} & 1.321\PM{0.001} & 1.733\PM{0.003} & 3.071\PM{0.002} & 1.627 \\
    & NPTL (ours)        & 0.102\PM{0.001} & 0.606\PM{0.002} & 1.347\PM{0.001} & 1.297\PM{0.003} & 0.838 \\
    \bottomrule
    \end{tabular}}
\end{table}

If the chosen prior distribution is not appropriate, it can have a substantial impact on the performance of BMA with posterior samples. To exemplify this, we present the BMA performance using the zero-mean isotropic Gaussian prior, denoted as $\calN(\mathbf{0},\sigma^2*I)$. The empirical results in \cref{table/R20_additional} clearly demonstrate how an ill-suited prior distribution can significantly deteriorate performance.

\subsection{Supplementary results for ResNet-50}

\textbf{Ablation study on Dirichlet concentration.}
Here, we conducted an ablation study on concentration parameter $\alpha$. As we already mentioned in \cref{main:subsec:base}, $\alpha$ indicates the amount of our belief in prior knowledge. \cref{figure/R50_alpha} shows that when distribution shifts occur between the upstream dataset and the downstream dataset (s.a. \texttt{C101}, \texttt{D47}), the gradual increment of alpha leads to a decline in the model's performance. However, if the upstream dataset and the downstream dataset are similar (s.a. \texttt{F102}, \texttt{P37}), increasing $\alpha$ value leads to enhanced model performance. 

\textbf{Ablation study on block Dirichlet.}
When the number of downstream training data $n$ increases, sampling accurate values from Dirichlet distribution becomes challenging due to the numerical issues. To empirically validate this point, we conducted an experiment for the image classification task. Specifically, we confirmed a significant decline in ACC and a rise in NLL when we replaced block Dirichlet (where the number of blocks is 10) with non-block Dirichlet in the ResNet-50 experiments conducted on \texttt{C101}. The test ACC decreased from {0.901\PM{0.002}} to {0.880\PM{0.001}}, while the NLL increased from {0.317\PM{0.003}} to {0.375\PM{0.002}}.
Hinging on this observation, we fixed the number of blocks to 10 throughout the paper.

\begin{figure}
    \centering
    \includegraphics[width=\textwidth]{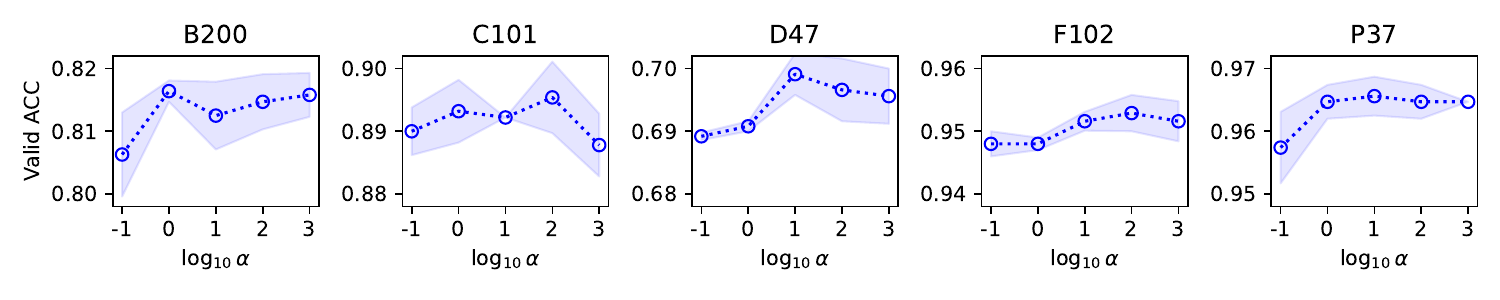}
    \caption{\textbf{Ablations on prior belief in NPTL.} It highlights that the optimal $\alpha$ value, representing the prior belief towards the pre-trained model, varies across diverse downstream datasets.}
    \label{figure/R50_alpha}
\end{figure}

\subsection{Supplementary results for semantic segmentation experiments}
In this subsection, we extended our experiments to semantic segmentation using the PASCAL VOC 2012 dataset, following the approach of \citet{shwartz2022pre}. Utilizing the pre-trained ResNet-50 as our backbone model, we employed the DeepLabv3+~\citep{chen2017rethinking} model as our complete model. Our experimental setup, except for our model, remained consistent with that of \citet{shwartz2022pre}. \cref{table/segment} presents conclusive evidence that NPTL outperforms other baseline models.
\begin{table}[t]
\centering
\caption{\textbf{Mean IoU results for PASCAL VOC 2012.} BMA Mean IoU performance of SGLD with L2SP, SGLD with supervised pre-trained PTYL, SGLD with self-supervised pre-trained PTYL, and NPTL with ResNet-50 backbone.}
\label{table/segment}
\resizebox{0.8\textwidth}{!}{\begin{tabular}{clcccc}
\toprule

Dataset & Backbone & L2SP & SL PTYL & SSL PTYL & NPTL\\
\midrule
    PASCAL VOC 2012 
    & ResNet-50    & 73.16 & 73.72 & 74.15 & 74.72\\
\bottomrule
\end{tabular}}
\end{table}

\section{Discussion on the limitation of the NPTL approach}
\label{app:sec:limitation}
The potential drawback of the suggested NPTL approach lies in the additional training cost. Specifically, implementing the proposed NPTL algorithm incurs extra computational expenses for obtaining $f_{\text{probed}}$ through linear probing on the given downstream data. However, it is important to note that the computational cost associated with linear probing is relatively minimal compared to the training of the entire network.
\section{Discussion on robustness of the NPL posterior to model misspecification}
\label{app:sec:nplrobust}
In this section, we explore the robustness of the NPL posterior compared to the regular Bayesian posterior on $\btheta$ in the face of model misspecification. Both the parametric and NPL posteriors target the same parameter, denoted as $\btheta^*$, which minimizes the KL divergence between the true distribution $F^*$ and the model distribution $F_\theta$~\citep{walker2013bayesian}.

However, it is crucial to note that NPL utilizes a nonparametric prior, leading to posterior distributions on $\btheta^*$ with superior asymptotic properties compared to the regular Bayesian posterior when the model is misspecified. The key distinction lies in the fact that parametric Bayesian inference assumes the existence of $\btheta^*$ such that $F_{\btheta^*} = F^*$. On the other hand, NPL does not make this assumption, acknowledging model misspecification, and updates the posterior distribution $\pi_n(F)$ in a nonparametric manner. This results in more robust posterior inferences on $\btheta^*$ and asymptotically superior predictions, as observed practically~\citep{fong2019scalable}.

To elaborate on the superior asymptotic properties, \citet{lyddon2019general} demonstrate that the Bayesian bootstrap posterior (having the same limit as NPL) asymptotically yields the sandwich covariance matrix, known for its robustness. Conversely, the parametric posterior does not achieve this asymptotically.

In terms of prediction, our focus is on the posterior predictive density $p_n(y) = \int f_\btheta(y) \pi_n(\btheta)$, where $\pi_n$ represents either the Bayesian or NPL posterior. Theorem 1 of \citet{lyddon2018nonparametric} shows that asymptotically, the KL divergence between $F^*$ and $P_n$ is smaller for the NPL posterior compared to the Bayesian posterior. This asymptotic improvement is attributed to the robust sandwich covariance matrix~\citep{muller2013risk}.
\end{document}